\theoremstyle{definition}
\theoremstyle{remark}
\renewcommand{\[}{\begin{eqnarray}}
\renewcommand{\]}{\end{eqnarray}}
\renewcommand{\[}{\begin{eqnarray}}
\renewcommand{\]}{\end{eqnarray}}
\newcommand{\R}{\mathbb{R}}
\newcommand{\F}{\mathcal{F}}
\newcommand{\E}{\mathbb{E}}
\icmltitlerunning{Implicit Acceleration and Feature Learning in Infinitely Wide Neural Networks with Bottlenecks}
\begin{document}

\twocolumn[
\icmltitle{Implicit Acceleration and Feature Learning in\\Infinitely Wide Neural Networks with Bottlenecks}

\begin{icmlauthorlist}
\icmlauthor{Etai Littwin}{apple}
\icmlauthor{Omid Saremi}{apple}
\icmlauthor{Shuangfei Zhai}{apple}
\icmlauthor{Vimal Thilak}{apple}
\icmlauthor{Hanlin Goh}{apple}
\icmlauthor{Joshua M. Susskind}{apple}
\icmlauthor{Greg Yang}{MSR}
\end{icmlauthorlist}

\icmlaffiliation{apple}{Apple}
\icmlaffiliation{MSR}{MSR}

\icmlcorrespondingauthor{Etai Littwin. }{elittwin@apple.com}

\icmlkeywords{neural tangent kernel, NTK, kernel regime, feature learning, infinite neural networks, bottleneck networks}
\vskip 0.3in
]



\printAffiliationsAndNotice{}  
\begin{abstract}
We analyze the learning dynamics of infinitely wide neural networks with a finite sized bottleneck. Unlike the neural tangent kernel limit, a bottleneck in an otherwise infinite width network allows data dependent feature learning in its bottleneck representation. We empirically show that a single bottleneck in infinite networks dramatically accelerates training when compared to purely infinite networks, with an improved overall performance. We discuss the acceleration phenomena by drawing similarities to infinitely wide deep linear models, where the acceleration effect of a bottleneck can be understood theoretically. 
\end{abstract}
\section{Introduction}
The study of infinitely wide neural networks is one of the most actively researched topics in deep learning theory \cite{NTK,gp1,gp2,gp3,gp4,gp5,gp6,gp7,yang,yang2,yang3,TP2b,Littwin2020OnRK,Littwin2020OnTO}. Previous work identified distinct training regimes that are determined by the networks hyperparameters. In \cite{Yang2020FeatureLI}, it is shown that by correctly scaling hyperparameters such as learning rate, weight multipliers and initialization constants, neural network training dynamics under gradient descent exhibit a limiting behaviour as the width of the network tends to infinity. Two distinct training regimes are identified in the limit. 1) The kernel regime, where the network evolves like a linear model during training. In this regime, intermediate activations in infinite layers change by a vanishing amount, stripping the network of its ability to learn features. 
2) The feature learning regime, where the intermediate activations change in a nontrivial way during training, resulting in data dependent feature learning. \\
In \cite{Yang2020FeatureLI}, a clear dichotomy exists between these regimes in the infinite width limit, where a network is either in one regime or another, but not in both at the same time. A network undergoing training by gradient descent algorithms can be placed in one regime or the other depending on its parametrization at initialization. The underlying assumption in both cases however is that all hidden layers are infinitely wide. 
Some practical architectures incorporate narrow layers by design, or use varying width layers of which some are wide, while others are narrow. For example, bottleneck layers force a network to learn a low dimensional latent representation, and are typically used as part of a generally wide network. Perhaps the most prominent architecture which uses such layers is the autoencoder, where the encoder learns a typically low dimensional representation of its input, while the decoder reconstructs the input from its latent representation.
In these types of models, the discrepancy between wide and narrow layers is a carefully hand designed feature of the architecture. Therefore, the standard infinite width approximation cannot be applied without "assuming away" a prominent architectural feature.\\
In this work, we consider a different type of limit, where a bottleneck layer of finite width is inserted in an otherwise infinitely wide network. From a Bayesian perspective, such models have been investigated in \cite{Agrawal2020WideNN,Aitchison2020WhyBI}, however their learning dynamics under gradient descent have not yet been explored to the best of our knowledge. As we show, gradient descent on such models can be understood and simulated exactly in function space rather than parameter space, at a relatively cheap computational cost. This is in contrast to the feature learning limit in \cite{Yang2020FeatureLI}, where it is practically infeasible to simulate exact learning dynamics in the limit without approximations. Empirically, we show that adding a bottleneck layer can significantly boost speed of training convergence. We further speculate on possible reasons for this dramatic boost in convergence speed by drawing equivalence to infinitely wide deep linear models with bottlenecks, where acceleration of the bottlenecks is fully tractable. \\


\section{Neural Networks With Bottlenecks}\label{sec:bottle}

A natural way of thinking about wide neural networks with bottlenecks is through composition of networks. Let $f_n(x;w):\R^{d} \to \R^{d_r}$ denote the (vector) output of a neural network given input vector $x \in \R^d$ and parameters $w$, where $n$ denotes the width of the hidden layers\footnote{We use the term "width" losely here. For MLPs, this means the size of the hidden layers, or the feature dimension in convolutional neural nets.}. 
In the present work, we consider the case where the input $x$ is itself given by the output of a neural network $x(\xi) = g_n(\xi;\theta): \R^{d_0} \to \R^d$, given input vector $\xi \in \R^{d_0}$ and parameters $\theta$, with hidden layers width parametrized by $m$. For simplicity, we assume the width of all hidden layers in both $f$ and $g$ is $n$. The output of the composed architecture is then given by $\F_n(\xi) = (f_n \circ g_n)(\xi)$. Note that even if $n$ is large, the output of $g_n(\xi)$ is still $d$ dimensional, hence $\F_n$ can be viewed as a wide network with a bottleneck of dimension $d$. \\
\paragraph{Setup}
\label{par:setup}
To make things concrete, we consider the case where $f_n$ implement an MLP of depths $2$ and width $n$. Due to technical considerations,\footnote{Rigorously generalizing the claims in this paper to deeper MLPs is not straightforward, and involves a considerable technical challenge. This difficulty arises due to the structure of the input-output Jacobian, which cannot be implemented as a Tensor Program \cite{yang,yang2,yang3} in its current form.} we restrict our discussion in the current paper to a 1 hidden layer MLP for $f$, and leave the rigorous analysis of deeper networks to future work. Given an input vector $g \in \R^d$ and parameters $w = \{u,v\}$, the output $f_n(g)$ is given by:
\begin{align}\label{eqn:mlp}
    f_n(x) = \frac{v}{\sqrt{n}}\phi(\frac{ux}{\sqrt{d}})
\end{align}
where $u \in \R^{n \times d},v \in \R^{d_r \times n}$ are the weight matrices sampled from a normal distribution, and
$\phi$ is a coordinate nonlinearity which we assume, for the sake of analysis, is twice differentiable with bounded derivatives. We let $g$ implement an arbitrary neural network function with parameter matrices $\theta = \{\theta^l\}_{l=1}^{L}$, with suitable dimensions. 
We are interested in theoretically understanding the behaviour of the composition $\F_n = f_n \circ g_n$ during training in the overparametrized scenario where the width $n$ of the hidden layers tend to infinity, while the bottleneck dimension $d$ remains fixed in size. 
Our setup involves the training of the composition function $\F_n$ on a training dataset $\{\xi_i\}_{i=1}^N$, using a loss function $\mathcal{L}$ implicitly containing the labels. We state our results assuming infinitesimal learning rate (aka gradient flow), however we expect our results to carry over to SGD. Since $\F_n$ contains a bottleneck of finite size, we will have to reason about the evolution of forward and backward signals as they propagate through a finite sized layer. 
\paragraph{Notations}
We use $x,\tilde{x}$ to denote generic placeholder vectors to $f$, and $g(\xi)$ as a specific instantiation of $x$ by the output $g(\xi,\theta)$. We denote the input-output Jacobian $J(x) = \frac{\partial f(x)}{\partial x} \in \R^{d_r \times d}$, with the notation $\tilde{J} = J(\tilde{x})$. We use $\F,g,J$ for $\F(\xi), g(\xi), J = J\big(g(\xi)\big)$ where $\xi$ is some generic input. 
To remove clutter, we use subscripts $n,t,i$ to denote the value of a vector/matrix parametrized by $n$ at time $t$ given sample $\xi_i$.  (i.e $g_{n,t,i} = g_{n,t}(\xi_i),g_{n,t} = g_{n,t}(\xi)$ and same for $J$). We use superscripts to denote coordinates of a vector/matrix (i.e $g^\alpha$ is the $\alpha$ coordinate of vector $g$). The absence of a subscript $n$ implies we have taken $n \to \infty$. Finally, we use $\chi_i$ to denote the loss derivative  for sample $i$ (i.e $\chi_i = \frac{\partial \mathcal{L}}{\partial \F_i} \in \R^{d_r}$). 



\subsection{Preliminaries}\label{sec:pre}
As width increases, pre-activations of initialized neural networks without bottlenecks approach a centered gaussian process (GP), independent across coordinates but possibly correlated across inputs. At the output level, the following hold at initialization:
\begin{align}\label{eqn:nngp}
\lim_{n \to \infty} g_n(\xi) \overset{d}{=} g(\xi),&~~~ \lim_{n \to \infty} f_n(x) \overset{d}{=} f(x)
\end{align}
where $g(\xi),f(x)$ are GPs with respect to their respective inputs. For $f$ as implemented in \cref{eqn:mlp}, we can write the defining covariance $\Lambda$ of the GP given input samples $x,\tilde{x}$:
\begin{align}
\forall_\alpha,\begin{pmatrix}
f^\alpha(x)\\
f^\alpha(\tilde{x}) 
\end{pmatrix} &\sim \mathcal{N}\big(0, \Lambda(x,\tilde{x})\big)\\
\Lambda(x,\tilde{x}) &= \begin{pmatrix} \Sigma(x,x) & \Sigma(x,\tilde{x})\\ \Sigma(\tilde{x},x) & \Sigma(\tilde{x},\tilde{x}) \end{pmatrix} \in \R^{2 \times 2}
\end{align}

We assume both $f,g$ have empirical NTKs $\mathcal{K}_n ,\Theta_n$ defined as $\mathcal{K}_n(x,\tilde{x}) = \frac{\partial f(x)}{\partial w}\frac{\partial  f(\tilde{x})^\top}{\partial w} \in \R^{d_r \times d_r},\Theta_n(\xi,\tilde{\xi}) = \frac{\partial g(\xi)}{\partial \theta}\frac{\partial g(\tilde{\xi})^\top}{\partial \theta} \in \R^{d \times d}$
with corresponding limits $\mathcal{K}(x,\tilde{x})I_{d_r}, \Theta(\xi,\tilde{\xi})I_{d}$, where $I_{d_r},I_d$ are identity matrices of size $d_r,d$, and $\mathcal{K}(x,\tilde{x}),\Theta(\xi,\tilde{\xi}) \in \R$.
Consistent with intuition, it was rigorously shown in \cite{Agrawal2020WideNN} that as the width increases the output of an MLP with bottlenecks converges to a composition of GPs. \\
\paragraph{Training} We consider training the model described in \cref{sec:bottle}. Under gradient flow, the weights evolve according to $\dot{w}_t = -\nabla_{w_t}\mathcal{L}_t,~~~\dot{\theta}_t = -\nabla_{\theta_t}\mathcal{L}_t$. The evolution of the output of the composition function can be described by the following set of ODEs:  
\begin{align}\label{eqn:gf}
\dot{\mathbf{\F}}_{n,t}(\xi) &= \frac{\partial \F_{n,t}(\xi)}{\partial w_t}\dot{w}_t + \frac{\partial \F_{n,t}(\xi)}{\partial \theta_t}\dot{\theta}_t
\end{align}
Substituting the empirical kernel definitions:
\begin{align}\label{eqn:random_kernel}
\dot{\F}_{n,t}(\xi) &= -\sum_{i=1}^N\mathcal{K}_{n,t}\big(g_{n,t},g_{n,t,i}\big)\chi_{n,t,i} \\\nonumber
&-\sum_{i=1}^NJ_{n,t} \Theta_{n,t}(\xi,\xi_i)J^{\top}_{n,t,i} \chi_{n,t,i}
\end{align}
where $g_{n,t} = g_{n,t}(\xi), J_{n,t} = J_{n,t}(\xi)$. The above evolutionary equations can be interpreted as Kernel equations with random, evolving kernels that depend on the weights $w,\theta$. In \cite{NTK}, it was shown for infinite width networks (without bottlenecks) the output is fully deterministic at any time $t$ conditioned on the initial GP output at time $t=0$. In contrast, since both bottleneck embedding $g$ and input-output Jacobian $J$ are finite, we expect \cref{eqn:random_kernel} to remain random even when taking the limit $n \to \infty$. To get a more complete view of the evolution of the composition function $\F$ during training at the limit, we must reason about the dynamics of the Jacobian term $J$.

\section{Dynamics in Function Space} 
A key observation in our analysis is that the input-output Jacobian $J(x)$ converges to a multivariate GP, and evolves as a linear function in the infinite width limit, similarly to outputs of an infinitely wide network. Here, $J(x)$ will have non trivial correlations across its coordinates, unlike layers in the NTK limit where the coordinates are independent. Our first result relating to the initial state of $J$ is stated in the following proposition:

\begin{restatable}[GP behaviour of the Jacobian]{lemma}{GP}\label{lemma:GP}
For $f_n(x)$ and its limit $f(x)$ as described in \cref{eqn:mlp,eqn:nngp}, the following holds at initialization for every pair of fixed inputs $x,\tilde{x}$:
\[
\lim_{n \to \infty}J_n(x) = J(x) \label{eqn:jac}
\]
where $J(x)$ is a multivariate GP with independent rows, and $J(x),f(x)$ are jointly Gaussian 
with:
\begin{align}
\begin{pmatrix}\label{JJ:cov}
J^{\alpha,\beta}(x)\\
J^{\alpha,\gamma}(\tilde{x})
\end{pmatrix} &\sim \mathcal{N}(\bm{0}, \begin{pmatrix} \Sigma_{(2)}^{\beta,\beta}(x,x) & \Sigma_{(2)}^{\beta,\gamma}(x,\tilde{x})\\ \Sigma_{(2)}^{\gamma,\beta}(\tilde{x},x) & \Sigma_{(2)}^{\gamma,\gamma}(\tilde{x},\tilde{x}) \end{pmatrix}\\
\begin{pmatrix}\label{fJ:cov}
f^\alpha(x)\\
J^{\alpha,\beta}(\tilde{x})
\end{pmatrix} &\sim \mathcal{N}(\bm{0}, \begin{pmatrix} \Sigma(x,\tilde{x}) & \Sigma_{(1)}^{\beta}(x,\tilde{x})\\ \Sigma_{(1)}^\beta(x,\tilde{x})^\top & \Sigma_{(2)}^{\beta,\beta}(\tilde{x},\tilde{x}) \end{pmatrix}
\end{align}
where:
\begin{align}\label{sig1:sig2:def}
\Sigma_{(1)}(x,\tilde{x}) &= \frac{\partial}{\partial b}\Sigma(x,b)\Big|_{b=\tilde{x}} \in \R^{1\times d}\\
\Sigma_{(2)}(a,b) &= \frac{\partial^2}{\partial a\partial b}\Sigma(a,b)\Big|_{a=x,b=\tilde{x}} \in \R^{d \times d}
\end{align}
\end{restatable}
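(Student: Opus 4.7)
The key observation is that for the single-hidden-layer $f$ in \eqref{eqn:mlp}, differentiation in $x$ gives the explicit formula
\begin{equation*}
J_n^{\alpha,\beta}(x) = \frac{1}{\sqrt{nd}}\sum_{k=1}^n v^{\alpha,k}\,\phi'\!\left(\tfrac{(ux)^k}{\sqrt d}\right) u^{k,\beta},
\end{equation*}
so, just as $f_n^\alpha(x)=\tfrac{1}{\sqrt n}\sum_k v^{\alpha,k}\phi((ux)^k/\sqrt d)$, the Jacobian is an average over $k$ of terms each depending only on the pair $(v^{\cdot,k}, u^{k,\cdot})$, and these pairs are i.i.d.\ across~$k$. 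For any fixed finite collection of queries $\{(x_p,\alpha_p)\}$ for $f$ and $\{(\tilde x_q,\alpha'_q,\beta_q)\}$ for $J$, the joint output vector is therefore a $\tfrac{1}{\sqrt n}$-scaled sum of i.i.d., mean-zero random vectors whose covariance is finite under the bounded-derivative hypothesis on $\phi$. The multivariate Lindeberg--L\'evy CLT then yields joint convergence in distribution to a centered multivariate Gaussian, which already gives the Gaussianity part of the claim.

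\textbf{Matching the covariance.} I then compute the covariance of a single summand. Independence of the outer weights $\{v^{\alpha,k}\}$ kills all $k\neq k'$ cross terms, leaving
\begin{align*}
\E\!\left[f_n^\alpha(x)\, J_n^{\alpha,\beta}(\tilde x)\right] &= \tfrac{1}{\sqrt d}\,\E_u\!\left[\phi(u^\top x/\sqrt d)\,\phi'(u^\top \tilde x/\sqrt d)\,u^\beta\right], \\
\E\!\left[J_n^{\alpha,\beta}(x)\, J_n^{\alpha,\gamma}(\tilde x)\right] &= \tfrac{1}{d}\,\E_u\!\left[\phi'(u^\top x/\sqrt d)\,\phi'(u^\top \tilde x/\sqrt d)\,u^\beta u^\gamma\right],
\end{align*}
where $u\sim\mathcal N(0,I_d)$ denotes a single row of the first-layer weight matrix. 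Differentiating the NNGP kernel $\Sigma(x,b)=\E_u[\phi(u^\top x/\sqrt d)\phi(u^\top b/\sqrt d)]$ once in $b$ and then twice in $(a,b)$, passing derivatives through the expectation via dominated convergence (justified because $\phi,\phi',\phi''$ are bounded), reproduces exactly the formulas in \eqref{sig1:sig2:def}. This identifies the covariance blocks appearing in \eqref{JJ:cov} and \eqref{fJ:cov}.

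\textbf{Independence of rows, and the main obstacle.} Independence across the output index $\alpha$ is automatic: for $\alpha\neq\alpha'$, any mixed second moment inherits a factor $\E[v^{\alpha,k}v^{\alpha',k}]=0$, so the limiting covariance matrix is block-diagonal in $\alpha$, and joint Gaussianity upgrades zero covariance to full independence of the rows of $J(x)$. The only analytic effort in the proof lies in verifying the CLT moment hypothesis and the derivative-under-the-integral step, both routine under the assumed boundedness of $\phi,\phi',\phi''$. The genuine difficulty the paper flags in its footnote---that for deeper $f$ the Jacobian cannot be written as a single Tensor Program, so the master theorem does not directly apply---does not arise here, because the one-hidden-layer case enjoys the simple i.i.d.-over-$k$ structure used above; an equivalent route is to view $(f_n,J_n)$ as a short Tensor Program in the random matrices $u$ and $v$ and quote its CLT.
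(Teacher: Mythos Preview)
Your proposal is correct and follows essentially the same route as the paper: write $J_n^{\alpha,\beta}(x)$ explicitly as a $\tfrac{1}{\sqrt n}$-scaled sum of i.i.d.\ zero-mean terms indexed by hidden units, apply the multivariate CLT jointly to $(f_n,J_n)$, compute the covariance of a single summand, and pass derivatives through the expectation by dominated convergence to identify the limiting covariance blocks with $\Sigma_{(1)}$ and $\Sigma_{(2)}$; row independence likewise comes from $\E[v^{\alpha,k}v^{\alpha',k}]=0$. The Tensor Program remark is extra commentary not present in the paper's proof, but harmless.
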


\cref{lemma:GP} already illustrates a novel aspect of infinite width networks. As the input-output Jacobian is frequently used to derive sensitivity to perturbations, \cref{lemma:GP} shows that sensitivity to perturbations and outputs can be jointly modeled as a multivariate GP for sufficiently wide models.  \\
In the next theorem, we show that the $J(x)$ evolves linearly in wide models, in a similar fashion to the network outputs. Now we are ready to characterize the full training dynamics of $\F$ in the infinite width limit in the following theorem:
\begin{restatable}[Evolution of composed function]{thm}{main}\label{thm:main}
For networks $f,g$ as in \cref{sec:bottle}, the following ODEs describe the dynamics of $\F,g,J$ in the limit of $n \to \infty$:
\begin{align}
\dot{g}_t(\xi) &= -\sum_{i=1}^N\Theta(\xi,\xi_i)J_{t,i}^\top\chi_{t,i} \label{eqn:eq1}\\
\dot{J}_t(x) &= - \sum_{i=1}^N\chi_{t,i}\Xi(x,g_{t,i})^\top \label{eqn:eq2}\\
\dot{\F}_t(\xi) &= -\sum_{i=1}^N\Big[\mathcal{K}\big(g_t,g_{t,i}\big)I_{d_r} + 
\Theta(\xi,\xi_i)J_tJ_{t,i}^{\top}\Big]\chi_{t,i} \label{eqn:eq3}
\end{align}
where $\Xi(-,-)$ is a deterministic function defined as $\Xi(x,\tilde{x}) \in \R^{ d } = \lim_{n \to \infty} \frac{\partial^\top}{\partial x}\mathcal{K}_n^{\alpha,\alpha}(x,\tilde{x})$.
\end{restatable}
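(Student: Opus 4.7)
The plan is to start from the exact finite-width dynamics obtained by differentiating the gradient-flow updates via the chain rule, and then pass to $n\to\infty$ using the kernel convergence recorded in \cref{sec:pre} together with \cref{lemma:GP} for the initial state of $J$. Modulo the scalarizations $\mathcal{K}_{n,t}\to\mathcal{K}I_{d_r}$ and $\Theta_{n,t}\to\Theta I_d$ (which carry the entire infinite-width technology), the three ODEs should fall out by algebra once one also knows that $\nabla_x \mathcal{K}_{n,t}$ converges to a deterministic vector-valued function $\Xi$.

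First I would write down the finite-$n$ gradient flow in parameter space,
\begin{align*}
\dot{w}_t = -\sum_i \Bigl(\tfrac{\partial f_{n,t,i}}{\partial w}\Bigr)^{\!\!\top}\!\chi_{n,t,i}, \quad
\dot{\theta}_t = -\sum_i \Bigl(\tfrac{\partial g_{n,t,i}}{\partial \theta}\Bigr)^{\!\!\top}\!J_{n,t,i}^\top \chi_{n,t,i},
\end{align*}
and push forward to function space. The chain rule immediately gives, exactly,
\begin{align*}
\dot g_{n,t}(\xi) &= -\sum_i \Theta_{n,t}(\xi,\xi_i)\,J_{n,t,i}^\top \chi_{n,t,i},\\
\dot f_{n,t}(x) &= -\sum_i \mathcal{K}_{n,t}(x,g_{n,t,i})\,\chi_{n,t,i},
\end{align*}
where in the second line $x$ is held fixed, so only the $w$-update enters. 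Next I would differentiate the second identity in $x$; since this commutes with $\partial_t$ at a fixed sample path, it yields the key finite-$n$ Jacobian equation
\begin{align*}
\dot J_{n,t}(x) = -\sum_i \chi_{n,t,i}\,\bigl[\nabla_x \mathcal{K}_{n,t}(x,g_{n,t,i})\bigr]^{\!\top}.
\end{align*}
Applying the chain rule once more to $\F_{n,t}(\xi)=f_{n,t}(g_{n,t}(\xi))$ reproduces the finite-$n$ version of \cref{eqn:random_kernel}.

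With the finite-$n$ equations in hand, I would substitute the limits $\mathcal{K}_{n,t}(x,\tilde x)\to \mathcal{K}(x,\tilde x)I_{d_r}$, $\Theta_{n,t}(\xi,\tilde\xi)\to \Theta(\xi,\tilde\xi)I_d$, and, by differentiating the diagonal limit in $x$, $\nabla_x \mathcal{K}_{n,t}^{\alpha,\gamma}(x,\tilde x)\to \delta_{\alpha\gamma}\Xi(x,\tilde x)$. The $I_d$ factor drops out of the $g$-equation to give \cref{eqn:eq1}; the Kronecker $\delta_{\alpha\gamma}$ turns the matrix-vector product into the outer product $\chi_{t,i}\,\Xi(x,g_{t,i})^\top$ in the Jacobian equation to give \cref{eqn:eq2}; and the composed equation collapses to \cref{eqn:eq3}. \cref{lemma:GP} supplies the random initial condition $J_0(x) = J(x)$, and together with the scalar kernels $\mathcal{K},\Theta$ and the vector $\Xi$ this closes the system in function space.

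The hard part will be rigorously justifying the kernel convergence at \emph{every} time $t$ along the trajectory (not only at initialization), and in particular the convergence of $\nabla_x \mathcal{K}_{n,t}$ rather than just of $\mathcal{K}_{n,t}$ itself. Because the bottleneck keeps $g_{n,t,i}$ and $J_{n,t,i}$ random in the limit, and both feed back into $\dot\theta$ and $\dot w$, one must set up a single joint Tensor Program that tracks $f,g,J,\mathcal{K},\Theta,\Xi$ simultaneously and appeal to the Master Theorem to interchange the limits in $n$, $t$, and $\partial_x$. The footnote in \cref{sec:bottle} flags precisely this obstruction: $J(x)$ is not natively expressible as a Tensor Program output. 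A natural workaround is to realize each column $J^{\cdot,k}(x)$ as the $\epsilon\to 0$ finite difference $\epsilon^{-1}\bigl(f(x+\epsilon e_k)-f(x)\bigr)$, which is a legitimate Tensor Program quantity for each fixed $\epsilon$, and then exchange $\epsilon\to 0$ with $n\to\infty$ after checking the necessary uniformity.
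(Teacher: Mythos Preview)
Your outline matches the paper's: write the finite-$n$ chain-rule equations for $\dot g_{n,t}$, $\dot f_{n,t}$, $\dot{\F}_{n,t}$, obtain $\dot J_{n,t}(x)$ by differentiating $\dot f_{n,t}(x)$ in $x$, then pass to the limit using the scalar kernel limits. The one substantive divergence is in how you propose to establish $\nabla_x \mathcal{K}_{n,t}\to \Xi\, I_{d_r}$. You sketch a Tensor-Program/Master-Theorem route with a finite-difference realization of $J$; the paper instead exploits the restriction to a single-hidden-layer $f$ and does the computation by hand: it writes $\mathcal{K}_n(x,\tilde x)=\tfrac{x^\top\tilde x}{d}\cdot\tfrac1n\tfrac{\partial f_n}{\partial y}\tfrac{\partial f_n}{\partial y}^\top+\tfrac{z_n(x)^\top z_n(\tilde x)}{n}I_{d_r}$, differentiates in $x^\alpha$, and shows each of the three resulting terms converges by the law of large numbers plus dominated convergence (using the assumed boundedness of $\phi',\phi''$). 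This is precisely why the paper confines itself to depth-one $f$: the explicit term-by-term LLN argument sidesteps the Tensor-Program obstruction you flag, at the cost of generality. Your finite-difference idea would buy depth but requires an extra uniformity-in-$\epsilon$ argument the paper never attempts. On the ``every $t$'' issue, the paper is lighter than you might expect: it notes that pointwise convergence upgrades to local uniform convergence on compacts, assumes $g_{n,t}$ and $J_{n,t}$ stay bounded, and then carries out the $\nabla_x\mathcal{K}_n$ computation essentially at the level of the initialization weights (relying implicitly on NTK freezing for $f$).
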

For $\phi = \text{ReLU}$, we give an explicit form for $\Xi$ in \cref{sec:exp}. 
ODEs in \cref{eqn:eq1,eqn:eq2,eqn:eq3} depend on deterministic, frozen kernels $\mathcal{K},\Theta,\Xi$. The evolution of $\F,g,J$ is hence completely deterministic after conditioning on initial states, and can therefore be expressed in functional space.


\section{Implicit Acceleration by Bottlenecks in Linear Networks}\label{sec:implicit}
To intuitively understand the training aspects of introducing bottlenecks in infinite width networks, we draw inspiration from deep linear networks. Assume $f_n,g_n$ implement deep linear MLPs of depth $L_f,L_g$ respectively and width $n$, and let $w_{\text{eff}} \in \R^{d_r \times d} = \frac{1}{\sqrt{n^{(L_f-1)} d}}w^{L_f}w^{L_{f-1}}...w^1,~\theta_\text{eff} \in \R^{d \times d_0} = \frac{1}{\sqrt{n^{(L_g-1)} d_0}}\theta^{L_g}\theta^{L_{g-1}}...\theta^1$. Hence, we have that $g(\xi) = \theta_\text{eff} \xi$, $f(g) = w_{\text{eff}} g$ and $\F(\xi) = w_{\text{eff}}\theta_\text{eff} \xi$. In finite networks, recent results have shown that in some cases, the stacking of linear layers produces an acceleration effect, and a low rank bias when optimized by gradient descent \cite{Arora2018OnTO}. Moreover, the acceleration effect is akin to momentum, and cannot be reproduced by adding some regularizer to the objective.  However, the acceleration effect as outlined in \cite{Arora2018OnTO} disappears when considering the NTK regime. This is because in this regime, training speed is determined by the NTK itself, which does not change meaningfully with stacking of additional linear layers. Indeed, for a linear $g$ we have that $\Theta(\xi,\tilde{\xi}) = L_g\frac{\xi^\top\tilde{\xi}}{d_0}$. However, by introducing a bottleneck, we regain the lost acceleration effect, as illustrated in the following lemma.
\begin{restatable}{lemma}{linear}\label{lemma:linear}
In the limit of $n \to \infty$, optimizing $\F$ by running gradient flow on the weights $\{w^l\}_{l=1}^{L_f}$ with a learning rate $\epsilon_f$, and  $\{\theta^l\}_{l=1}^{L_g}$ with a learning rate $\epsilon_g$, is equivalent to running gradient flow directly on the effective weights $w_{\text{eff}}, \theta_\text{eff}$ with learning rates $L_f\epsilon_f,L_g\epsilon_g$.
\end{restatable}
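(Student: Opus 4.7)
The plan is to directly differentiate $w_{\text{eff}}$ and $\theta_{\text{eff}}$ in time via the product rule, substitute the gradient flow ODEs for the individual layer weights, and show that in the $n\to\infty$ limit the induced dynamics collapse to direct gradient flow on the effective matrices, with each of the $L_f$ (resp. $L_g$) layer contributions being asymptotically identical and thereby producing the claimed learning rate rescaling.

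First I would handle $w_{\text{eff}}$. Under gradient flow $\dot w^l = -\epsilon_f\nabla_{w^l}\mathcal{L}$, the chain rule through the product $w_{\text{eff}} \propto w^{L_f}\cdots w^1$ gives $\nabla_{w^l}\mathcal{L} \propto (w^{L_f}\cdots w^{l+1})^\top\,\chi\,g^\top(w^{l-1}\cdots w^1)^\top$ with $\chi = \partial\mathcal{L}/\partial\F$ and $g = \theta_{\text{eff}}\xi$. Applying the product rule to $w_{\text{eff}}$ then produces $L_f$ terms of the schematic form
\[
\Bigl[\tfrac{1}{n^{a}}(w^{L_f}\cdots w^{l+1})(w^{L_f}\cdots w^{l+1})^\top\Bigr]\,\chi\,g^\top\Bigl[\tfrac{1}{n^{b}}(w^{l-1}\cdots w^1)^\top(w^{l-1}\cdots w^1)\Bigr]
\]
with the correct powers of $n$ and $d$ absorbed into the normalizations that define $w_{\text{eff}}$.

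The central step is to show each bracketed factor concentrates to the appropriate scalar multiple of the identity as $n\to\infty$. This is a standard law-of-large-numbers / Marchenko–Pastur calculation: for i.i.d.\ Gaussian weight matrices of size $n\times n$ flanked by fixed-size boundary matrices of dimensions $d_r$ and $d$, products of the form $\frac{1}{n^k}(W_k\cdots W_1)(W_k\cdots W_1)^\top$ converge to $I$ on the relevant finite-dimensional subspace. Because, for a linear network, per-step weight increments scale as $O(1/\sqrt n)$, this concentration is preserved along the entire trajectory — the Tensor Program machinery cited in the paper can be invoked to make this uniform-in-time statement rigorous.

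After simplification, each of the $L_f$ summands collapses to the same matrix, which one identifies as $\nabla_{w_{\text{eff}}}\mathcal{L}$ (in the natural parametrization in which $\F = w_{\text{eff}}\,g$ is read off). Summing the $L_f$ identical contributions yields $\dot w_{\text{eff}} = -L_f\epsilon_f\,\nabla_{w_{\text{eff}}}\mathcal{L}$, which is precisely gradient flow on $w_{\text{eff}}$ at rate $L_f\epsilon_f$. Repeating the argument verbatim for the encoder ODEs on $\theta^l$, with the roles of input dimension $d_0$ and bottleneck dimension $d$ swapped, gives $\dot\theta_{\text{eff}} = -L_g\epsilon_g\,\nabla_{\theta_{\text{eff}}}\mathcal{L}$. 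The main obstacle I anticipate is the uniform-in-time version of the concentration step: concentration at initialization is classical, but one must check that the weight trajectories stay in the regime where the outer-product LLN remains valid, which is where the Tensor Program framework does the heavy lifting.
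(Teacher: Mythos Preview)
Your approach is correct but takes a genuinely different route from the paper. The paper's proof is essentially two lines and leverages \cref{thm:main}: it observes that the function-space ODEs in \cref{eqn:eq1,eqn:eq2,eqn:eq3} depend only on the kernel functions $\Theta,\mathcal{K},\Xi$, and for a depth-$L_f$ (resp.\ $L_g$) linear MLP the standard recursive NTK formula with identity activation gives $\mathcal{K}(x,\tilde x)=L_f\,\frac{x^\top\tilde x}{d}$, $\Xi(x,\tilde x)=L_f\,\frac{\tilde x}{d}$, $\Theta(\xi,\tilde\xi)=L_g\,\frac{\xi^\top\tilde\xi}{d_0}$ --- exactly the kernels one gets from direct gradient flow on $w_{\text{eff}},\theta_{\text{eff}}$ with the stated learning-rate rescaling. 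You instead work in effective-parameter space, differentiating $w_{\text{eff}}$ via the product rule and arguing that each of the $L_f$ resulting summands concentrates to the same matrix. Both arguments are valid; yours is essentially a first-principles re-derivation of the linear NTK and makes the origin of the factor $L_f$ very transparent (one identical contribution per layer). The paper's route is shorter and, crucially, sidesteps the uniform-in-time concentration you correctly flag as the main obstacle: once the dynamics are cast in function space with fixed kernels via \cref{thm:main}, no trajectory-level LLN on the weight products is needed. Your approach, by contrast, would stand independently of \cref{thm:main}, at the cost of that additional argument.
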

 \cref{lemma:linear} suggests an infinitely wide deep linear network with a bottleneck is essentially reduced to a two layer, finite linear network with weight matrices $w_{\text{eff}}, \theta_\text{eff}$ under gradient descent. Therefore, under mild initialization conditions,\footnote{The acceleration effect formally requires a small initialization and learning rates to hold. Empirically, these conditions may sometimes be relaxed.} a bottleneck brings about accelerated learning in linear networks. It is worth noting, additional capacity cannot be attained by stacking additional layers in linear networks. Moreover, shallow and deep linear functions represent the same function class. Hence training speed can be directly attributed to trajectories of gradient descent. This does not hold for nonlinear finite networks, where changes in depth or width also affect capacity. However, in the infinite width regime, with bottlenecks or not, capacity is infinite. Hence, we can isolate the effect of bottlenecks on training and acceleration without confounding capacity.
 

\section{Experiments}

We provide empirical support for our theoretical contributions in three parts:
\begin{itemize}
    \item In part 1, we conduct simulations to numerically verify the theory in \cref{lemma:GP,lemma:linear,thm:main}. We present these results in \cref{verify}.
    \item  In part 2, we train infinite neural networks with bottlenecks on MNIST \cite{mnist} and CIFAR-10 \cite{cifar} datasets by simulating SGD on the loss in function space, investigating training acceleration effects and test performance. \Cref{fig:mnist_cifar_train_loss} summarizes the results from these experiments. We observe that the accelerated training predicted by  \cref{lemma:linear} for linear models is also visible when we train infinite width nonlinear networks with bottlenecks on the two real world datasets.\footnote{Figure~\ref{fig:cifar10_lr1k_metrics:train_loss_main} show loss for first 15K steps on CIFAR-10, as some training runs did not complete in time due to compute resource scarcity. Extended results showing loss trajectories for models trained longer are available in figure~\ref{fig:cifar10_lr1k_metrics} in \cref{sec:exp}.} We present additional results and analysis in \cref{sec:exp}.
    \item In part 3, we run experiments with finite width networks trained with standard SGD and verify that the acceleration effect holds in this setting. These results are presented in \cref{finiteapprox}.
\end{itemize}

\label{expt:data:mnist}
\begin{figure}[!t]
    \begin{subfigure}[b]{\columnwidth}
        \includegraphics[width=\columnwidth]{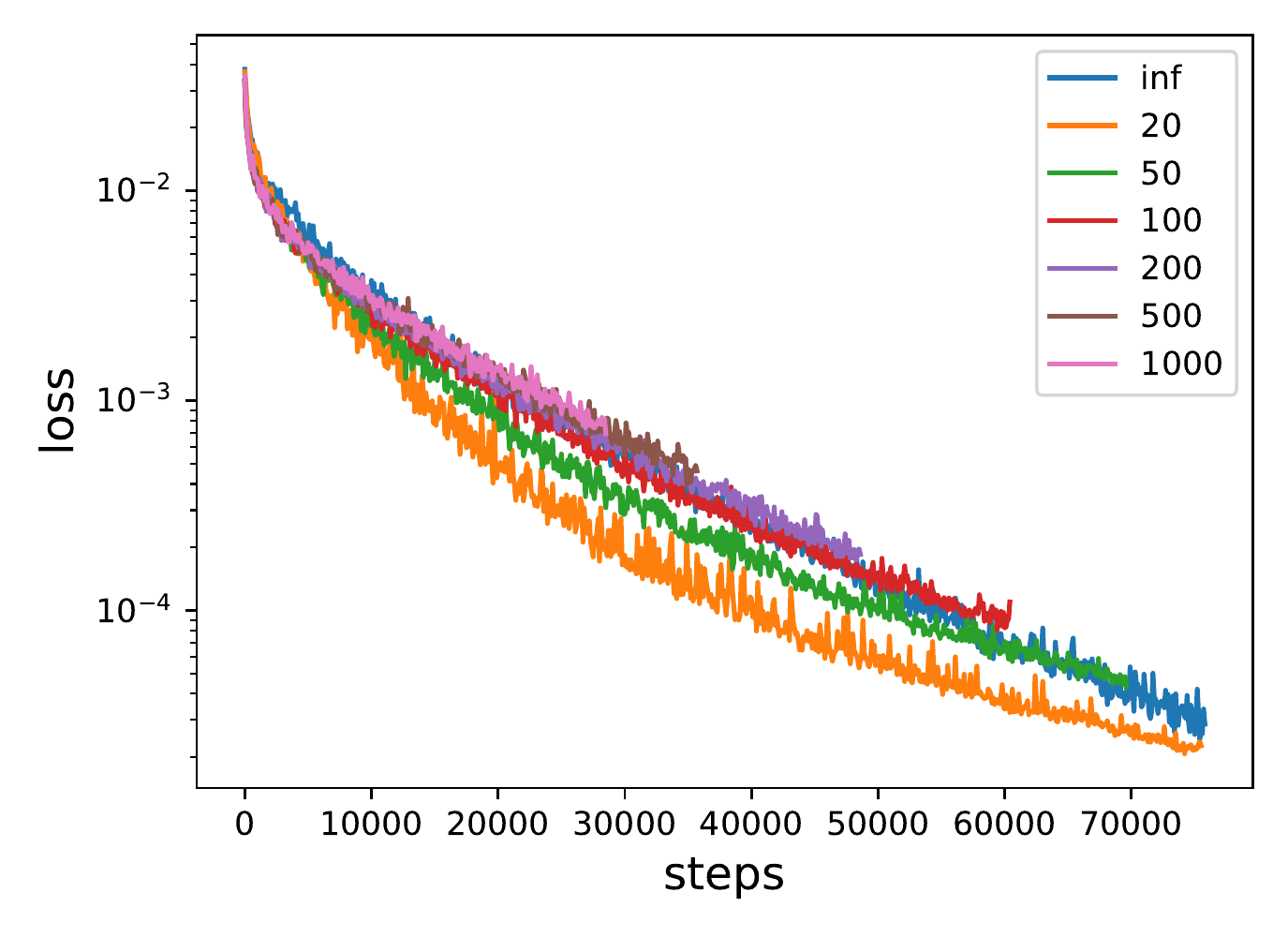}
        \caption{MNIST training loss}
        \label{fig:mnist_lr5k_metrics:train_loss_main}
    \end{subfigure}
    \begin{subfigure}[b]{\columnwidth}
        \includegraphics[width=\columnwidth]{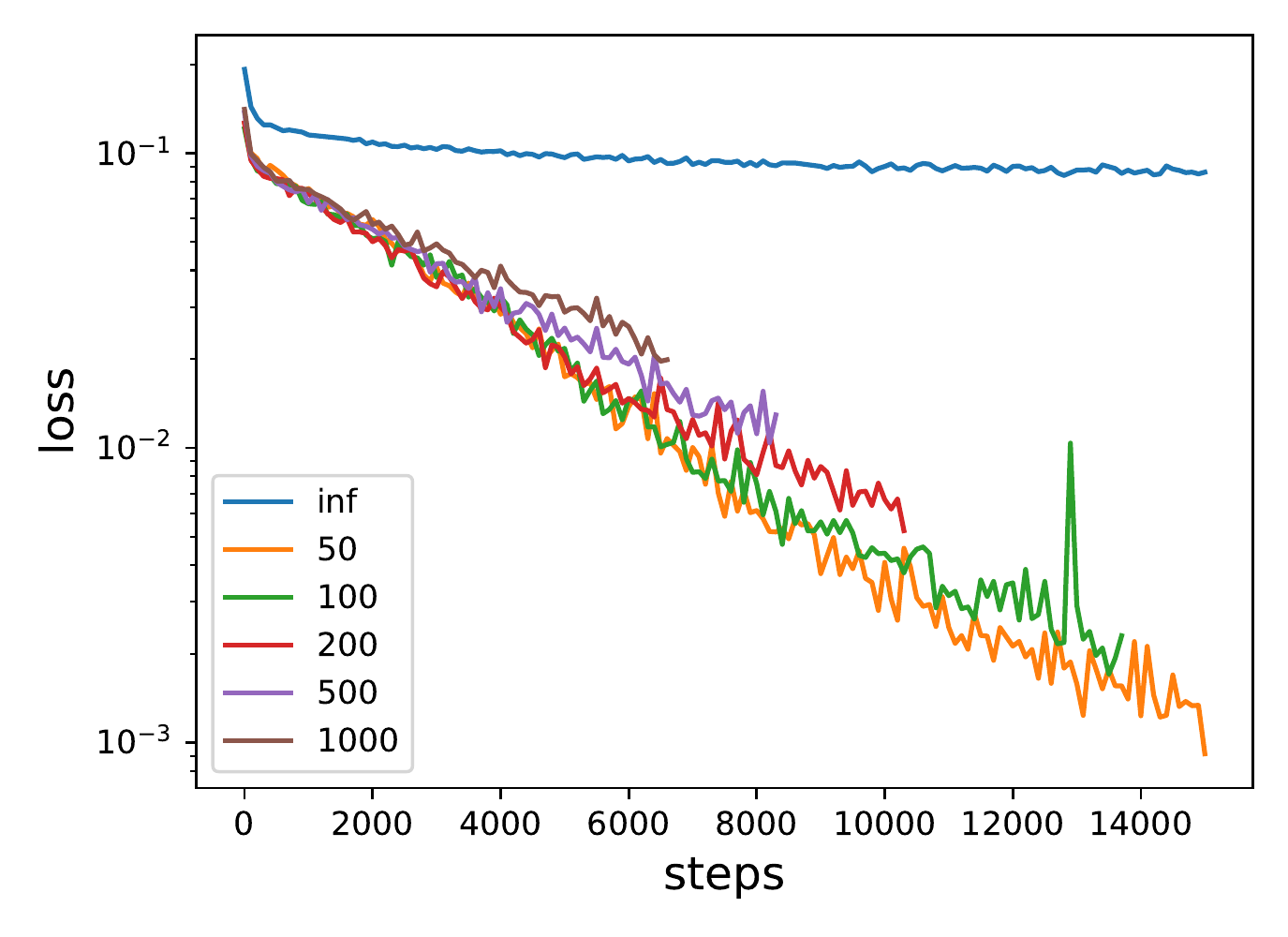}
        \caption{CIFAR-10 training}
        \label{fig:cifar10_lr1k_metrics:train_loss_main}
    \end{subfigure}

    \caption{{Training loss for infinite width bottleneck models with widths from smallest to largest (inf indicates infinite width bottleneck). Loss is reduced in function space by simulating SGD with \cref{eqn:eq1,eqn:eq2,eqn:eq3}}}%
    \label{fig:mnist_cifar_train_loss}
\end{figure}



\section{Conclusion}
In this work we investigate the effect of applying a bottleneck in an otherwise infinite width network. We do this by first deriving the ODEs corresponding to optimization of such a model under gradient flow. Our theoretical analysis reveals novel insights regarding the behaviour of input-output Jacobians, both at initialization and training. Though stated for shallow, single hidden layer networks post bottleneck, we expect our results to hold in more general cases. Empirically, we observe that infinite width networks with bottlenecks train much faster than their fully infinite counterparts, while typically achieving better overall performance. We hope our results pave the way for new understanding of learning dynamics beyond kernel regimes.

\bibliography{reference}
\bibliographystyle{icml2021}

\newpage
\appendix
\onecolumn
\section{Proof of \cref{lemma:GP}}
\GP*
\begin{proof}
Intuitively, the derivative process of any centered GP with a covariance kernel function $\mathcal{K}(x,\tilde{x})$ is another GP, provided that the kernel function $\mathcal{K}$ is everywhere differentiable. Since the Jacobian is by definition the derivative of the output with respect to the input, this implies that \cref{eqn:jac} holds, along with the stated covariances. However, the subtlety here is to show that the empirical Jacobian indeed converges in distribution to the derivative of the NN-GP in the limit. For a single hidden layer MLP, the convergence of the Jacobian to a GP can be established by a straightforward application of the standard CLT theorem, as in the feed forward case. \\
The Jacobian takes the form:
\[
J_n(x) = \frac{\partial f(x)}{\partial x} = \frac{v}{\sqrt{nd}} \text{diag}\big(\phi'(\frac{ux}{\sqrt{d}})\big)u
\]
Each element $J_n^{\alpha,\beta}(x)$ is hence given by:
\[
J_n^{\alpha,\beta}(x) = \frac{1}{\sqrt{nd}}\sum_{\gamma = 1}^n v^{\alpha,\gamma}\phi'^{\gamma}(\frac{ux}{\sqrt{d}})u^{\gamma,\beta}
\]
Note that the sequence $\{v^{\alpha,\gamma}\phi'^{\gamma}(ux)u^{\gamma,\beta}\}_{\gamma=1}^n$ is a sequence of zero mean iid variables. From the polynomial boundness of $\phi$, the coordinates of $\phi'$ have bounded second moment. We can then apply the CLT theorem to establish a convergence to a GP with some kernel function $\Sigma^\star \in \R^{3 \times 3}$:
\[
\begin{pmatrix}
f^\alpha(x)\\
J^{\alpha,\beta}(\tilde{x})\\
J^{\alpha,\delta}(x')
\end{pmatrix}
\overset{p}{=} \lim_{n \to \infty} \frac{1}{\sqrt{n}}\begin{pmatrix}
\sum_{\gamma=1}^n v^{\alpha,\gamma} \phi^{\gamma}(\frac{ux}{\sqrt{d}})\\
\frac{1}{\sqrt{d}}\sum_{\gamma = 1}^n v^{\alpha,\gamma}\phi'^{\gamma}(\frac{u\tilde{x}}{\sqrt{d}})u^{\gamma,\beta}\\
\frac{1}{\sqrt{d}}\sum_{\gamma = 1}^n v^{\alpha,\gamma}\phi'^{\gamma}(\frac{ux'}{\sqrt{d}})u^{\gamma,\delta}
\end{pmatrix} \overset{p}{=} \mathcal{N}(\bm{0},\Sigma^\star)
\]
From the independence of $u,v$, it trivially holds that $\forall_{\alpha,\beta}\E[J^{\alpha,\beta}(x)] = 0$, $\forall_{\alpha \neq \gamma,\beta,\delta}\E[J^{\alpha,\beta}(x)J^{\gamma,\delta}(\tilde{x})] = 0$ and $\forall_{\alpha \neq \gamma,\beta}\E[J^{\alpha,\beta}(x)f(\tilde{x})^\gamma] = 0$. For any two inputs $x,\tilde{x}$ we have:
\begin{align}
\forall_{\alpha,\beta, \delta}\E[J^{\alpha,\beta}(x)J^{\alpha,\delta}(\tilde{x})] &= \frac{1}{d}\E\Big[ \big(v^{\alpha,\gamma}\big)^2\phi'(\frac{ux}{\sqrt{d}})^\gamma \phi'(\frac{u\tilde{x}}{\sqrt{d}})^\gamma u^{\gamma,\beta}u^{\gamma,\delta}\Big]\\
&= \frac{1}{d}\E\Big[ \phi'(\frac{ux}{\sqrt{d}})^\gamma \phi'(\frac{u\tilde{x}}{\sqrt{d}})^\gamma u^{\gamma,\beta}u^{\gamma,\delta}\Big]\\
&= \E\Big[ \frac{\partial}{\partial x^\beta}\frac{\partial}{\partial \tilde{x}^\delta}\phi(\frac{ux}{\sqrt{d}})^\gamma \phi(\frac{u\tilde{x}}{\sqrt{d}})^\gamma \Big]
\end{align}

From our assumption that $\phi$ is differentiable with bounded derivatives, it holds that $\E\Big[ \big|\frac{\partial}{\partial x^\beta}\frac{\partial}{\partial \tilde{x}^\delta}\phi(\frac{ux}{\sqrt{d}})^\gamma \phi(\frac{u\tilde{x}}{\sqrt{d}})^\gamma \big|\Big] < \infty$. 
By the dominated convergence theorem:
\[
\E\Big[ \frac{\partial}{\partial x^\beta}\frac{\partial}{\partial \tilde{x}^\delta}\phi(\frac{ux}{\sqrt{d}})^\gamma \phi(\frac{u\tilde{x}}{\sqrt{d}})^\gamma \Big] = \frac{\partial}{\partial x^\beta}\frac{\partial}{\partial \tilde{x}^\delta}\E\Big[ \phi(\frac{ux}{\sqrt{d}})^\gamma \phi(\frac{u\tilde{x}}{\sqrt{d}})^\gamma \Big] = \frac{\partial^2}{\partial x^\beta \partial \tilde{x}^\delta}\Sigma(x,\tilde{x})
\]
Similarly:
\begin{align}
\forall_{\alpha,\beta, \delta}\E[J^{\alpha,\beta}(x)f^\alpha(\tilde{x})] &= \E\Big[\big(v^{\alpha,\gamma}\big)^2\phi'(\frac{ug}{\sqrt{d}})^\gamma \phi(\frac{u\tilde{x}}{\sqrt{d}})^\gamma u^{\gamma,\beta}\Big]\\
&= \frac{\partial}{\partial x^\beta}\E\Big[\phi(\frac{ux}{\sqrt{d}})^\gamma \phi(\frac{u\tilde{x}}{\sqrt{d}})^\gamma \Big] = \frac{\partial}{\partial x^\beta}\Sigma(x,\tilde{x})
\end{align}
which concludes the proof.
\end{proof}

\section{Proof of \cref{thm:main}}
\main*

\begin{proof}
The empirical dynamical equation for $\F,g$ are given in \cref{eqn:random_kernel}:
\begin{align}
\dot{g}_{n,t}(\xi) &= -\sum_{i=1}^N\Theta_{n,t}(\xi,\xi_i)J_{n,t,i}^\top\chi_{n,t,i}\\
\dot{\F}_{n,t}(\tilde{\xi}) &= -\sum_{i=1}^N\mathcal{K}_{n,t}\big(g_{n,t},g_{n,t,i}\big)\chi_{n,t,i} 
-\sum_{i=1}^NJ_{n,t} \Theta_{n,t}(\tilde{\xi},\xi_i)J^{\top}_{n,t,i} \chi_{n,t,i}
\end{align}

Prior results regarding convergence of NTK functions have been established pointwise on a fixed dataset \cite{yang2}. In our scenario, we have both the bottleneck embeddings and the Jacobian which evolve continuously during training. This would not pose a major problem to us since a pointwise convergence implies local uniform convergence over a closed region. All we need is insure both the bottleneck embedding terms $g$ and Jacobian terms $J$ do explode during training.
Assuming this is the case, it is straightforward that:
\begin{align}
    \lim_{n \to \infty } \sum_{i=1}^N\Theta_{n,t}(\xi,\xi_i)J_{n,t,i}^\top\chi_{n,t,i} &= \sum_{i=1}^N\Theta(\xi,\xi_i)J_{t,i}^\top\chi_{t,i}\\
    \lim_{n \to \infty}\sum_{i=1}^NJ_{n,t} \Theta_{n,t}(\tilde{\xi},\xi_i)J^{\top}_{n,t,i}\chi_{n,t,i} &= \sum_{i=1}^N \Theta(\tilde{\xi},\xi_i)J_tJ^{\top}_{t,i} \chi_{t,i} 
\end{align}

We now have to show how the bottleneck outputs evolve by deriving the dynamical equation for the Jacobian. 
This is done by taking the input derivative of the dynamical equation for $f(x)$. Namely:
\begin{align}
    \dot{J}_{n,t}(x) = \frac{\partial}{\partial x}\dot{f}_{n,t}(x) = -\frac{\partial}{\partial x}\sum_{i=1}^N\mathcal{K}_{n,t}(x,g_{n,t,i})\chi_{n,t,i} 
\end{align}
We will now prove that $x,\tilde{x}$, $\lim_{n \to \infty} \frac{\partial \mathcal{K}_{n,t}(x,\tilde{x})}{\partial x} = \frac{\partial \mathcal{K}(x,\tilde{x})I_{d_r}}{\partial x}$.

Let $y(x) = \frac{ux}{\sqrt{d}}, z(x) = \phi \big(y(x)\big)$.
For our 1 hidden layer MLP, the kernel $\mathcal{K}_n$ is given by:
\begin{align}
    \mathcal{K}_n(x,\tilde{x}) = \frac{\partial f_n(x)}{\partial w}\frac{\partial  f_n(\tilde{x})^\top}{\partial w} = \frac{x^\top \tilde{x}}{d}\cdot \frac{1}{n}\frac{\partial f(x)}{\partial y_n(x)}\frac{\partial f_n(\tilde{x})^\top}{\partial y_n(\tilde{x})} + \frac{z_n(x)^\top z_n(\tilde{x})}{n}I_{d_r}
\end{align}
Hence, we have that:
\begin{align}
    \frac{\partial}{\partial x^\alpha}\mathcal{K}_n(x,\tilde{x}) = \frac{\tilde{x}^\alpha}{d}\cdot \frac{1}{n}\frac{\partial f_n(x)}{\partial y_n(x)}\frac{\partial f_n(\tilde{x})^\top}{\partial y_n(\tilde{x})} + \frac{x^\top \tilde{x}}{d}\cdot \frac{1}{n}\frac{\partial^2 f_n(x)}{\partial y_n(x) \partial x^\alpha}\frac{\partial f_n(\tilde{x})^\top}{\partial y_n(\tilde{x})} + \frac{1}{n}\frac{\partial z_n(x)^\top}{\partial x^\alpha}z_n(\tilde{x})I_{r_d}
\end{align}

We now handle each term separately. For the first term, we have the trivial limit which holds throughout training:
\begin{align}
    \frac{\tilde{x}^\alpha}{d}\cdot \frac{1}{n}\frac{\partial f_n(x)}{\partial y_n(x)}\frac{\partial f_n(\tilde{x})^\top}{\partial y_n(\tilde{x})} \to \frac{\tilde{x}^\alpha}{d}\dot{\Sigma}(x,\tilde{x})
\end{align}
where:
\begin{align}
    \dot{\Sigma}(x,\tilde{x}) &= \E_{a,b \sim \mathcal{N}(\bm{0},\Lambda_0(x,\tilde{x}))}\big[\dot{\phi}(a)\dot{\phi}(b)\big],~~~
    \Lambda_0(x,\tilde{x}) = \begin{pmatrix} \frac{\|x\|^2}{d} & \frac{x^\top \tilde{x}}{d}\\ \frac{\tilde{x}^\top x}{d} & \frac{\|\tilde{x}\|^2}{d} \end{pmatrix}
\end{align}
For the second term:
\begin{align}
    \frac{\partial f_n^\beta(x)}{\partial y_n^\gamma(g)} = v^{\beta,\gamma} \phi'^\gamma(\frac{ux}{\sqrt{d}}),~~~\frac{\partial^2 f_n^\beta(x)}{\partial y_n^\gamma(x) \partial x^\alpha} = \frac{1}{\sqrt{d}}v^{\beta,\gamma}u^{\gamma,\alpha} \phi''^\gamma(\frac{ux}{\sqrt{d}})
\end{align}
hence at the limit:
\begin{align}
    \frac{1}{n}\frac{\partial^2 f_n^\beta(x)}{\partial y_n(x) \partial x^\alpha}\frac{\partial f_n^\delta(\tilde{x})^\top}{\partial y_n(\tilde{x})} &= \frac{1}{\sqrt{d}}\frac{\sum_{\gamma=1}^n v^{\beta,\gamma}v^{\delta,\gamma}u^{\gamma,\alpha} \phi''^\gamma(\frac{ux}{\sqrt{d}})\phi'^\gamma(\frac{u\tilde{x}}{\sqrt{d}})}{n} \label{eqn:clt}\\
    &\to \frac{1}{\sqrt{d}}\E\Big[u^{\gamma,\alpha}\phi''^\gamma(\frac{ux}{\sqrt{d}})\phi'^\gamma(\frac{u\tilde{x}}{\sqrt{d}}) \Big]\mathbb{1}(\beta = \delta)\label{eqn:before}\\
    &= \E\Big[\frac{\partial}{\partial x^\alpha} \phi'^\gamma(\frac{ux}{\sqrt{d}})\phi'^\gamma(\frac{u\tilde{x}}{\sqrt{d}}) \Big]\mathbb{1}(\beta = \delta)\label{eqn:after}\\
    &= \frac{\partial}{\partial x^\alpha}\E\Big[ \phi'^\gamma(\frac{ux}{\sqrt{d}})\phi'^\gamma(\frac{u\tilde{x}}{\sqrt{d}}) \Big]\mathbb{1}(\beta = \delta)\\
    &= \frac{\partial}{\partial x^\alpha}\dot{\Sigma}(x,\tilde{x})\mathbb{1}(\beta = \delta)
\end{align}
where we used the independence of the coordinates of $\phi',\phi''$ to apply the LLN (law of large numbers) theorem on \cref{eqn:clt}, and used the dominated convergence theorem to get from \cref{eqn:before} to \cref{eqn:after}. We arrive at the result for the second term:
\begin{align}
    \frac{x^\top \tilde{x}}{d}\cdot \frac{1}{n}\frac{\partial^2 f_n(x)}{\partial y_n(x) \partial x^\alpha}\frac{\partial f_n(\tilde{x})^\top}{\partial y_n(\tilde{x})} \to \frac{x^\top \tilde{x}}{d}\frac{\partial}{\partial x^\alpha}\dot{\Sigma}(x,\tilde{x})I_{d_r}
\end{align}

For the final term we have similarly:
\begin{align}
    \frac{1}{n}\frac{\partial z_n(x)^\top}{\partial x^\alpha}z_n(\tilde{x})I_{r_d} = \frac{1}{\sqrt{d}}\frac{\sum_{\gamma =1}^n u^{\gamma,\alpha}\phi'^\gamma(\frac{ux}{\sqrt{d}})\phi^\gamma(\frac{u\tilde{x}}{\sqrt{d}})}{n} \to \frac{\partial}{\partial x^\alpha}\Sigma(x,\tilde{x})
\end{align}
where we again used the independence of the coordinates of $\phi$ and applied LLN to the sum. Wrapping up all three terms, we arrive at:
\begin{align}
    \lim_{n \to \infty}\frac{\partial}{\partial x^\alpha}\mathcal{K}_n(x,\tilde{x}) = \frac{\partial}{\partial x^\alpha}\Big(\frac{x^\top \tilde{x}}{d}\dot{\Sigma}(x,\tilde{x}) + \Sigma(x,\tilde{x})\Big) = \frac{\partial}{\partial x^\alpha}\mathcal{K}(x,\tilde{x}) = \Xi^\alpha(x,\tilde{x})
\end{align}
concluding the proof.
\end{proof}

\section{Proof of \cref{lemma:linear}}
\linear*
\begin{proof}
We prove the claim by deriving the update equations using \cref{thm:main} for both cases and show their equivalence.
Concretely, we must show that the functions $\Theta,\mathcal{K},\Xi$ are the same in both cases ups to constants. When optimizing the effective weights $w_{\text{eff}},\theta_{\text{eff}}$ directly, we trivially have that $\Theta(\xi,\tilde{\xi}) = \frac{\xi^\top \tilde{\xi}}{d_0}, \mathcal{K}(x,\tilde{x}) = \frac{x^\top \tilde{x}}{d}$ and $\Xi(x,\tilde{x}) = \frac{\tilde{x}}{d}$.\\
When optimizing the weights $\{\theta^l\}_{l=1}^{L_g},\{w^l\}_{l=1}^{L_f}$ instead, we use the recursive formulas for the NTK of MLPs in \cite{exact}, equipped with linear activations, and arrive at $\Theta(\xi,\tilde{\xi}) = L_g\frac{\xi^\top \tilde{\xi}}{d_0}, \mathcal{K}(x,\tilde{x}) = L_f\frac{x^\top \tilde{x}}{d}$ and $\Xi(x,\tilde{x}) = L_f\frac{\tilde{x}}{d}$, concluding the proof.
\end{proof}

\section {Empirical support for implicit acceleration - setup and additional results}\label{sec:exp}
\paragraph{Stochastic Gradient Descent}
To perform empirical experiments, we resort to the discrete time version of \cref{eqn:eq1,eqn:eq2,eqn:eq3} which will allow simulating SGD on the loss in function space. Let $\xi_{i_s}$ denote the sample fed into the SGD algorithm at step $s$. The discrete evolution of $\F,g,J$ assuming a learning rate $\mu$ is given by:
\begin{align}
g_{t+1}(\xi) &= g_{0}(\xi) -\mu\sum_{s=1}^t\Theta(\xi,\xi_{i_s})J_{s,i_s}^\top\chi_{s,i_s} \label{eqn:d1}\\
J_{t+1}(x) &= J_{0}(x) -\mu\sum_{s=1}^t \chi_{s,i_s}\Xi(x,g_{s,i_s})^\top \label{eqn:d2}\\
\F_{t+1}(\xi) &= \F_{0}(\xi) -\mu\sum_{s=1}^t\mathcal{K}\big(g_{t+1}(\xi),g_{s,i_s}\big)I_{d_r}\chi_{s,i_s} \label{eqn:d3}
\end{align}

\subsection{Model description} 

We use 1 hidden layer MLPs with ReLU\footnote{While ReLUs dont meet the criteria of a twice differentiable activations, our results indeed hold still by approximating the ReLU function using the softplus function $\phi(a;m) = \frac{1}{m}\log(1 + \text{e}^{ma})$, and taking its limit $m \to \infty$ after taking $n \to \infty$.} activations for both $f,g$ in all experiments. For any pair of inputs $\xi,\tilde{\xi}$ with $D = \|\xi\|\|\tilde{\xi}\|, \lambda = \frac{\xi^\top \tilde{\xi}}{D}$, the NTK function $\Theta(\xi,\tilde{\xi})$ is given by $\Theta(\xi,\tilde{\xi}) = \frac{\xi^\top\tilde{\xi}}{d_0}\dot{\Sigma}(\xi,\tilde{\xi}) + \Sigma(\xi,\tilde{\xi})$
where:
\begin{align}
    \Sigma(\xi,\tilde{\xi}) = \frac{D}{d_0}\frac{\lambda(\pi - \text{arccos}(\lambda)) + \sqrt{1 - \lambda^2}}{2\pi},~~~
    \dot{\Sigma}(\xi,\tilde{\xi}) = \frac{\pi - \text{arccos}(\lambda)}{2\pi}
\end{align}
$\mathcal{K}(x,\tilde{x})$ computes the same function with the replacement $d_0 \to d$, and the inputs $D = \|x\|\|\tilde{x}\|, \lambda = \frac{x^\top \tilde{x}}{D}$.\\
The function $\Xi(x,\tilde{x})$ is computed by taking the derivative of $\mathcal{K}(x,\tilde{x})$ with respect to the first input $x$. This results in the following:
\begin{align}
    \Xi(x,\tilde{x}) = \frac{\tilde{x}}{d}\dot{\Sigma} + \frac{x^\top \tilde{x}}{d}\frac{1}{2\pi\sqrt{1 - \lambda^2}}\big(\frac{\tilde{x}}{D} - \lambda \frac{x}{\|x\|^2}\big) + \frac{D}{d}\dot{\Sigma}(x,\tilde{x})\big(\frac{\tilde{x}}{D} - \lambda \frac{x}{\|x\|^2}\big) + \frac{x\|\tilde{x}\|}{\|x\|D}\Sigma(x,\tilde{x})
\end{align}
We consider a range of bottleneck widths from infinite bottleneck (``no bottleneck'') (labeled as inf in plots) to a maximum value that varies per dataset as noted in the result section below. Finally, we use initialized networks with $n = 10000$ to sample $g_{0}(\xi), J_0(x), F_0(\xi)$ for all inputs. Experiments are implemented using the Tensorflow \cite{tensorflow2015-whitepaper} package.

\subsection{Tasks and datasets}

We consider a regression problem on synthetic dataset in \cref{expt:data:synthetic}, multi-class classification on MNIST dataset \cite{mnist} in \cref{expt:data:mnist} and a binary classification problem (dog vs deer) on CIFAR-10 dataset \cite{cifar} in \cref{expt:data:cifar10}. The networks are optimized with mean squared error (MSE) criterion where the targets are one-hot encoded vectors in classification experiments. 

\subsection{Compute budget}
We train all models for a fixed amount of time per set of hyperparameters that include learning rate and mini-batch size per dataset. This budget is necessary in order for us to manage the burden imposed on our compute resources. Consequently narrower finite width bottleneck models train for a larger number of steps compared to wider finite width bottleneck models.

\subsection{Results}
\label{appendix:expt:results}
We present the results of experiments with infinite width bottleneck nonlinear models in this section. Our main finding is that the acceleration effect suggested by \cref{lemma:linear} for linear models during optimization is also seen with nonlinear models on both synthetic and real datasets. Our presentation includes two types of plots that allows us to make observations about both training speed as well as test performance:
\paragraph{Loss evolution} We present plots that show the evolution of loss and accuracy for both training and test datasets. These plots allow us to make observations on the implicit acceleration effect in infinite width bottleneck models as well as its impact on test performance.

\paragraph{Training vs test loss} \label{appendix:expt:results:tvt} Additionally, we plot test loss as a function of training loss for several bottleneck widths. This plot allows us to compare test performance for a given training loss across different bottleneck widths. 

The procedure used to generate this plot is described below for completeness:
\begin{itemize}
    \item determine a range for training loss values that covers all bottleneck widths for a given dataset. We set this range to be the minimum and maximum recorded training losses across all bottleneck widths for each dataset.
    \item resample training loss values in the above range from training loss values recorded in our experiments.
    \item determine test loss values for the training loss values obtained in the above step again by resampling with linear interpolation.
    \item smooth both training and test loss values with a moving average filter of length 5.
\end{itemize}

\subsubsection{Synthetic dataset}
\label{expt:data:synthetic}

\begin{figure}[htb]
\centering
\begin{tabular}{cc}
\includegraphics[width=.5\columnwidth]{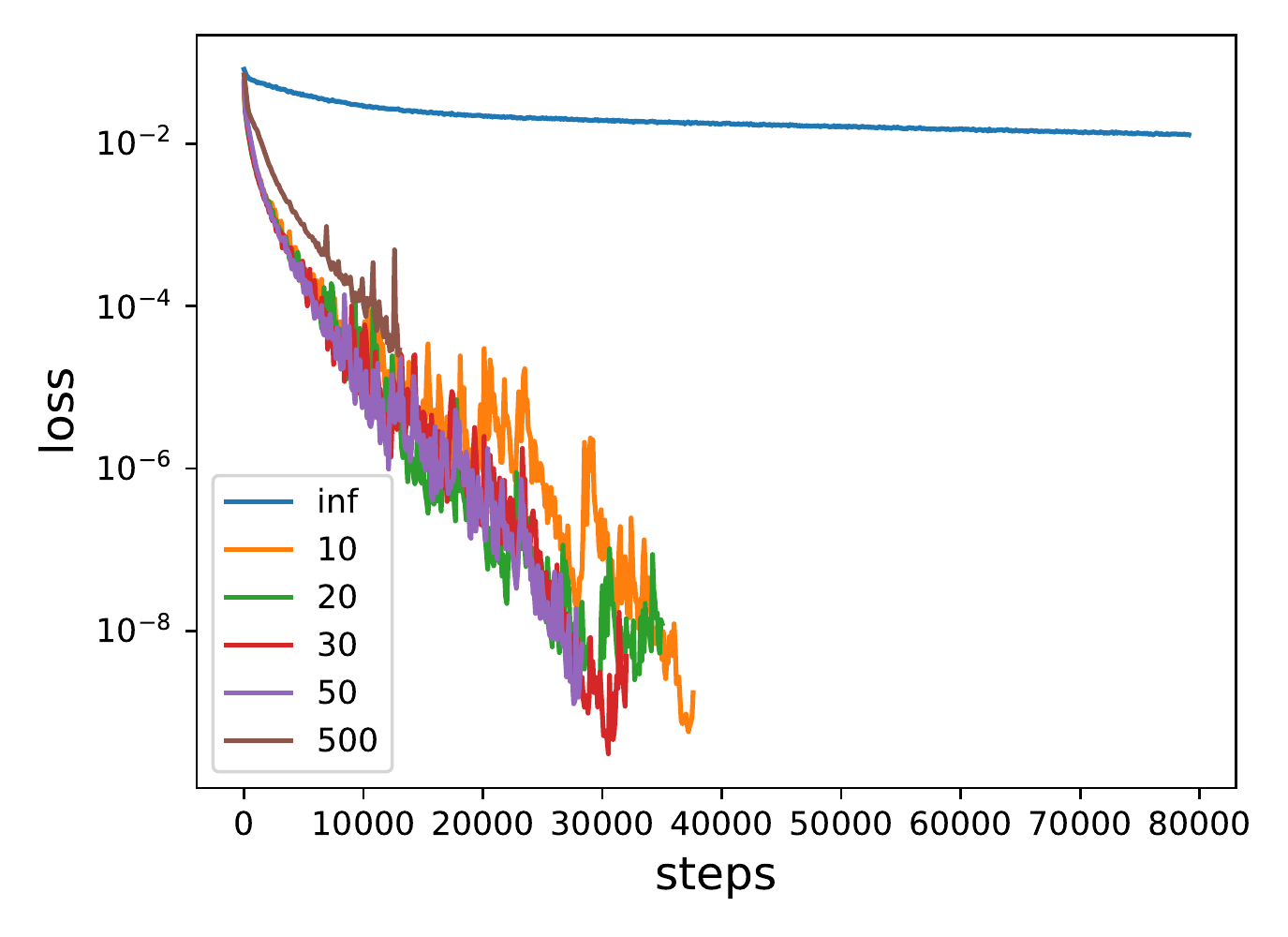}&
\includegraphics[width=.5\columnwidth]{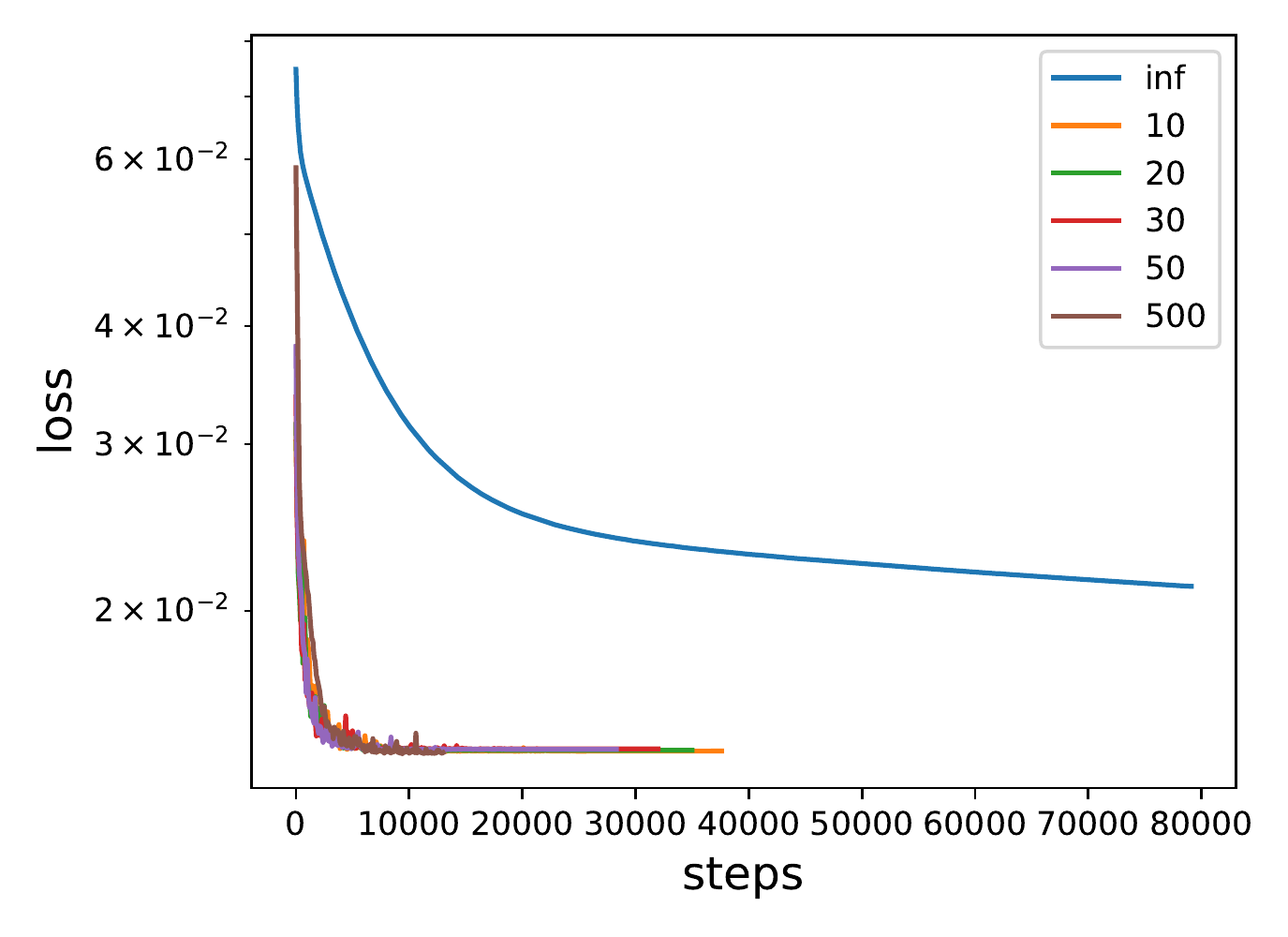}\\
(a) & (b)\\
\end{tabular}
  \caption{{Synthetic dataset:  metrics for finite bottleneck models of various widths. Widths are shown from smallest to largest where inf indicates infinite bottleneck (a) training loss  (b) test loss}}%
  \label{fig:synth_lr2k_metrics}
\end{figure}

\begin{figure}[htb]
\centering
\begin{tabular}{cc}
\includegraphics[width=.5\columnwidth]{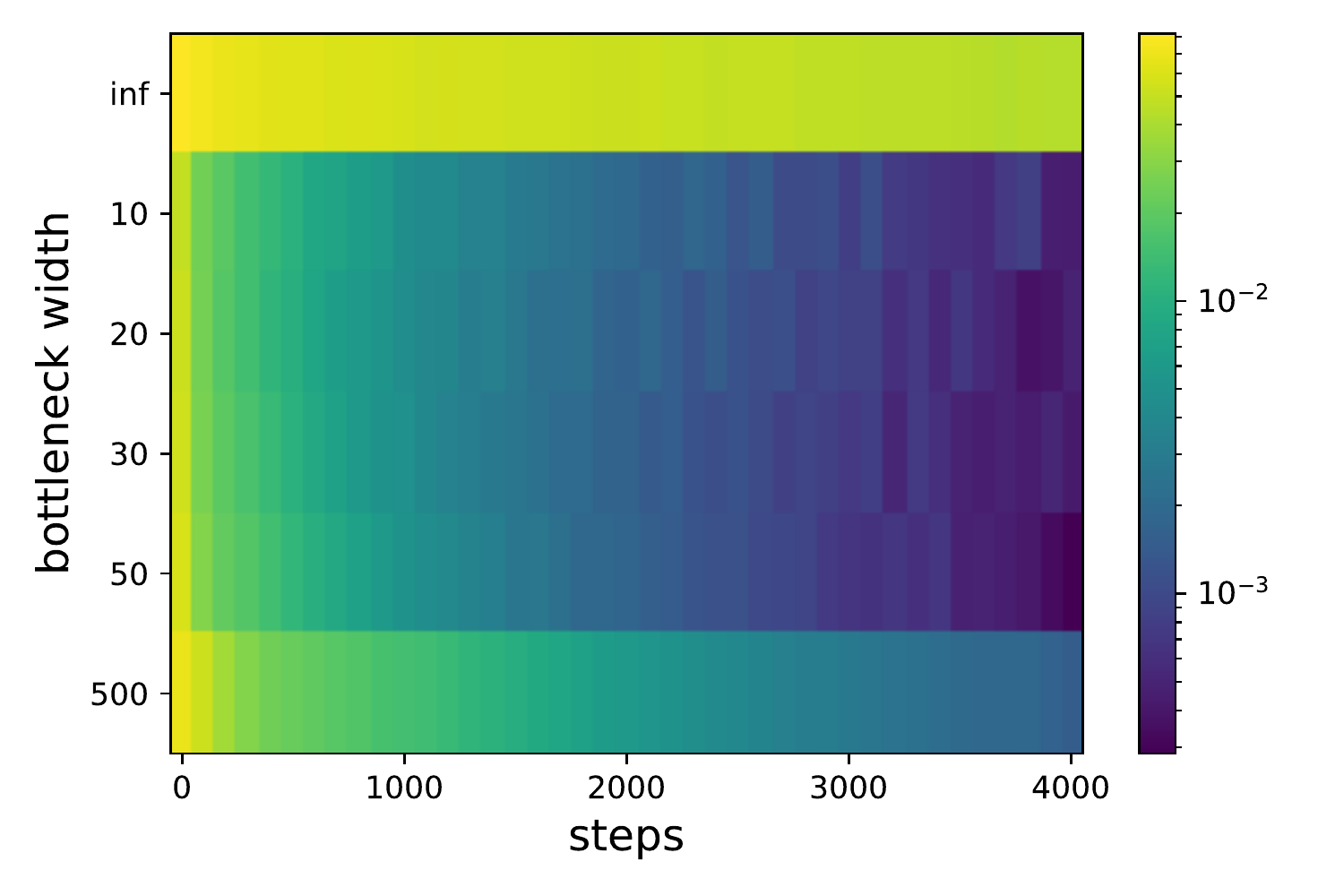}&
\includegraphics[width=.5\columnwidth]{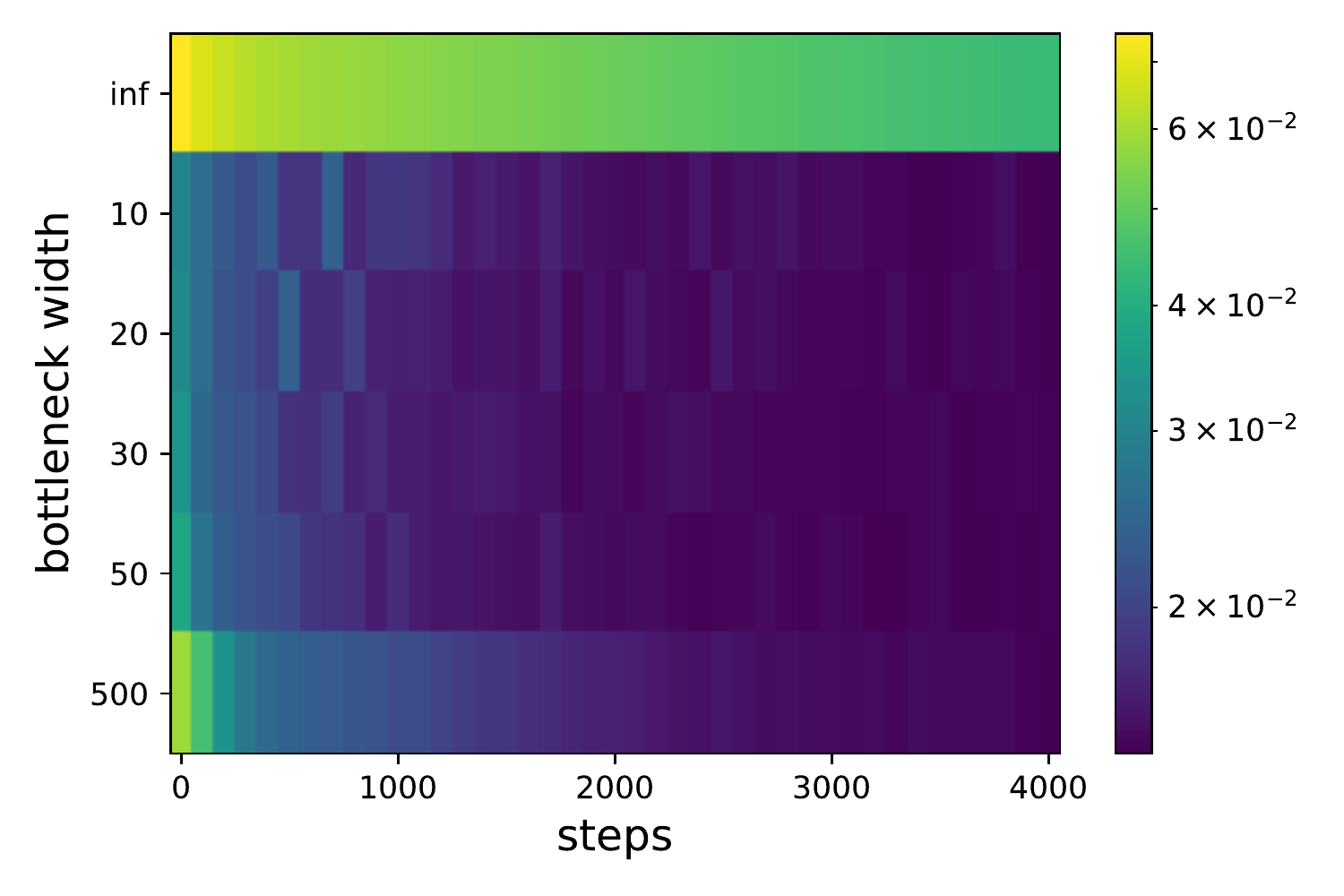}\\
(a) & (b)\\
\end{tabular}
  \caption{{Synthetic dataset: heatmap view of (a) training loss evolution and (b) test loss evolution for the first 4000 steps}}%
  \label{fig:synth_lr2k_heatmap}
\end{figure}

\begin{figure}[!b]
\centering
\begin{tabular}{cc}
\includegraphics[width=.5\columnwidth]{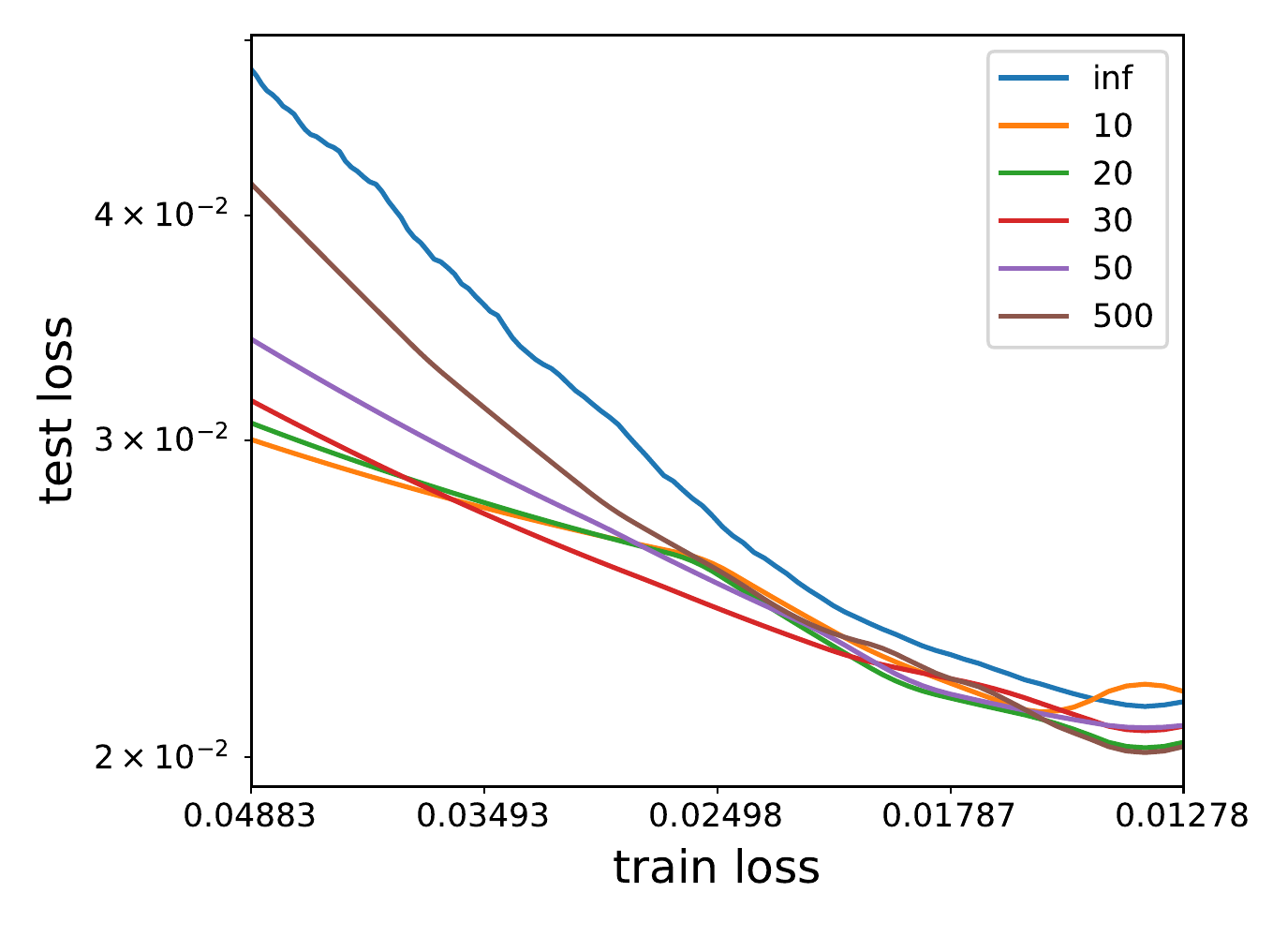}&
\includegraphics[width=.5\columnwidth]{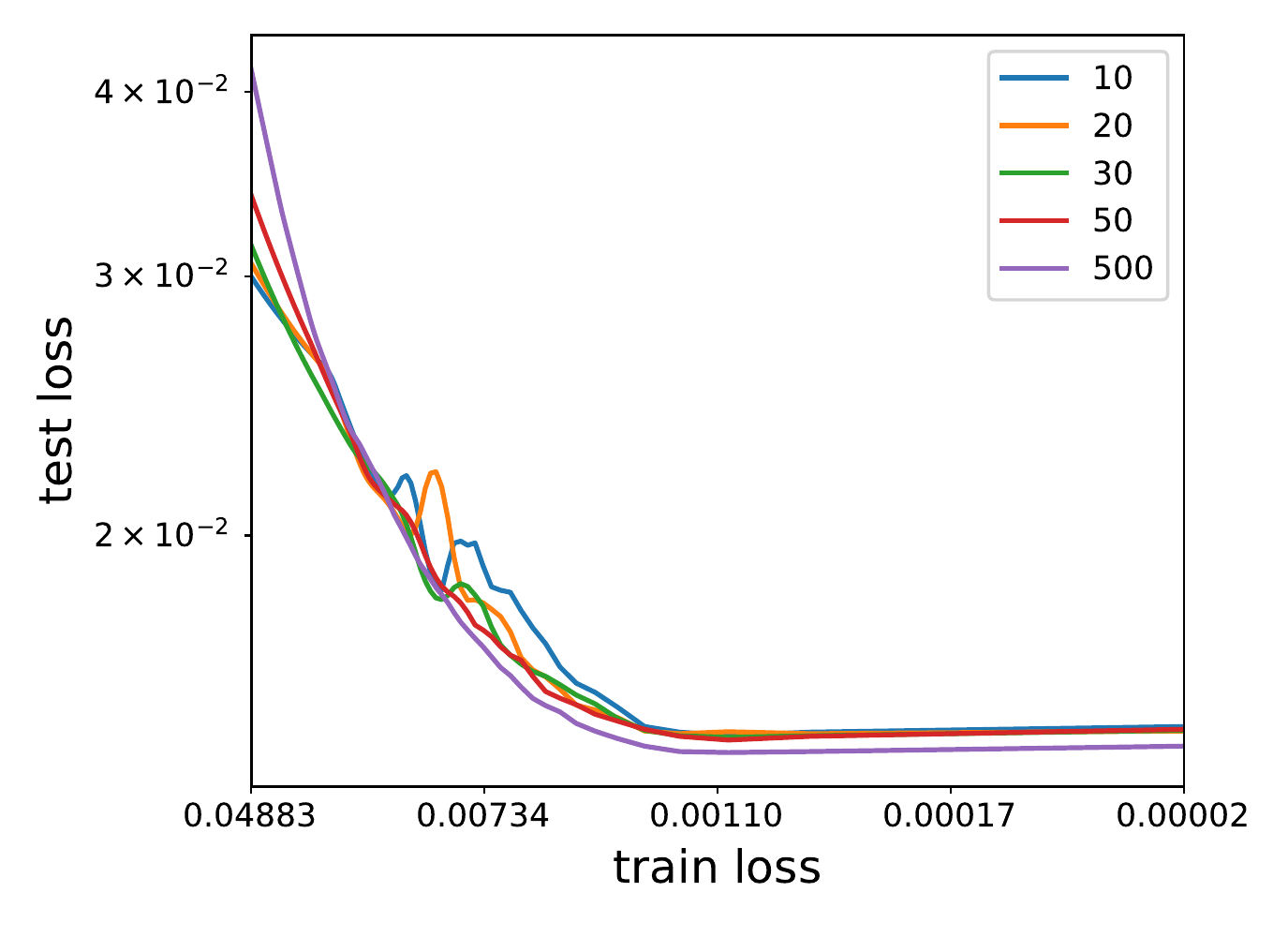}\\
(a) & (b)\\
\end{tabular}
  \caption{{Synthetic dataset: training loss vs test loss plot for (a) finite and infinite width bottleneck models and (b) finite width bottleneck models}}%
  \label{fig:synth_lr2k_tvt}
\end{figure}

We first study the behavior of infinite width bottleneck models by running \cref{eqn:d1,eqn:d2,eqn:d3} on a regression problem with synthetic data. To this end, we generate a dataset that consists of 2000 training and test samples where the:
\begin{itemize}
    \item inputs are random vectors sampled from a standard multivariate normal distribution.
    \item targets are formed by passing these vectors through a deep Gaussian Process model.
\end{itemize}
 The widths considered for this dataset range from infinite width (inf) to a width of 500. The models are trained with SGD using a mini-batch size of 20 and learning rate set to 2000.

\paragraph{Loss evolution}
Figure~\ref{fig:synth_lr2k_metrics}a and figure~\ref{fig:synth_lr2k_metrics}b show the evolution of training and test loss for models with different bottleneck widths. Figure~\ref{fig:synth_lr2k_metrics}a clearly shows that architectures with narrower bottleneck train faster than their wider bottleneck counterparts. Figure~\ref{fig:synth_lr2k_heatmap} shows a heatmap view of the evolution of training and test loss for the first 4000 steps to zero in on the early phase of learning. Figure~\ref{fig:synth_lr2k_heatmap} clearly demonstrates the acceleration effect of training and test loss for small to intermediate bottleneck sizes, while no bottleneck and large bottleneck models exhibit slower learning.

\paragraph{Training vs test loss}
We observe from figure~\ref{fig:synth_lr2k_metrics}a that the infinite width bottleneck model exhibits a higher training loss compared to its finite width bottleneck counterparts. To make meaningful comparisons, we resort to using two plots for this dataset --- figure~\ref{fig:synth_lr2k_tvt}a shows performance of all models including infinite width bottleneck (labeled as inf) and figure~\ref{fig:synth_lr2k_tvt}b that shows performance of finite width bottlenecks only. We make the following observations from the two plots:
\begin{itemize}
    \item finite width bottleneck models lead to a lower test loss for the same training loss over infinite width bottlenecks models.
\item the widest bottleneck width model (500) has the best test performance as seen from figure~\ref{fig:synth_lr2k_tvt}b
\end{itemize}
The above observations suggest that using a bottleneck in infinite width models can lead to better test performance across a range of bottleneck widths over infinite width (``no bottleneck'') model. The narrowest bottleneck width (10) considered here in this experiment appears to show an increase in test loss. This, however, is an artifact that arises due the training loss range displayed in figure~\ref{fig:synth_lr2k_tvt}a. We see from figure~\ref{fig:synth_lr2k_tvt}b the the training vs test loss curve eventually decreases and matches values observed in models with bottleneck widths of 20, 30 and 50. This non-monotonic behavior may be caused by sub-optimal hyperparameter selection that we plan to address as future work.

\subsubsection{MNIST dataset}
\begin{figure*}[htb]
    \begin{subfigure}[b]{0.25\textwidth}
        \includegraphics[width=\textwidth]{workshop_figures/MNIST/train_loss.pdf}
        \caption{train loss}
        \label{fig:mnist_lr5k_metrics:train_loss}
    \end{subfigure}\hfill
    \begin{subfigure}[b]{0.25\textwidth}
        \includegraphics[width=\textwidth]{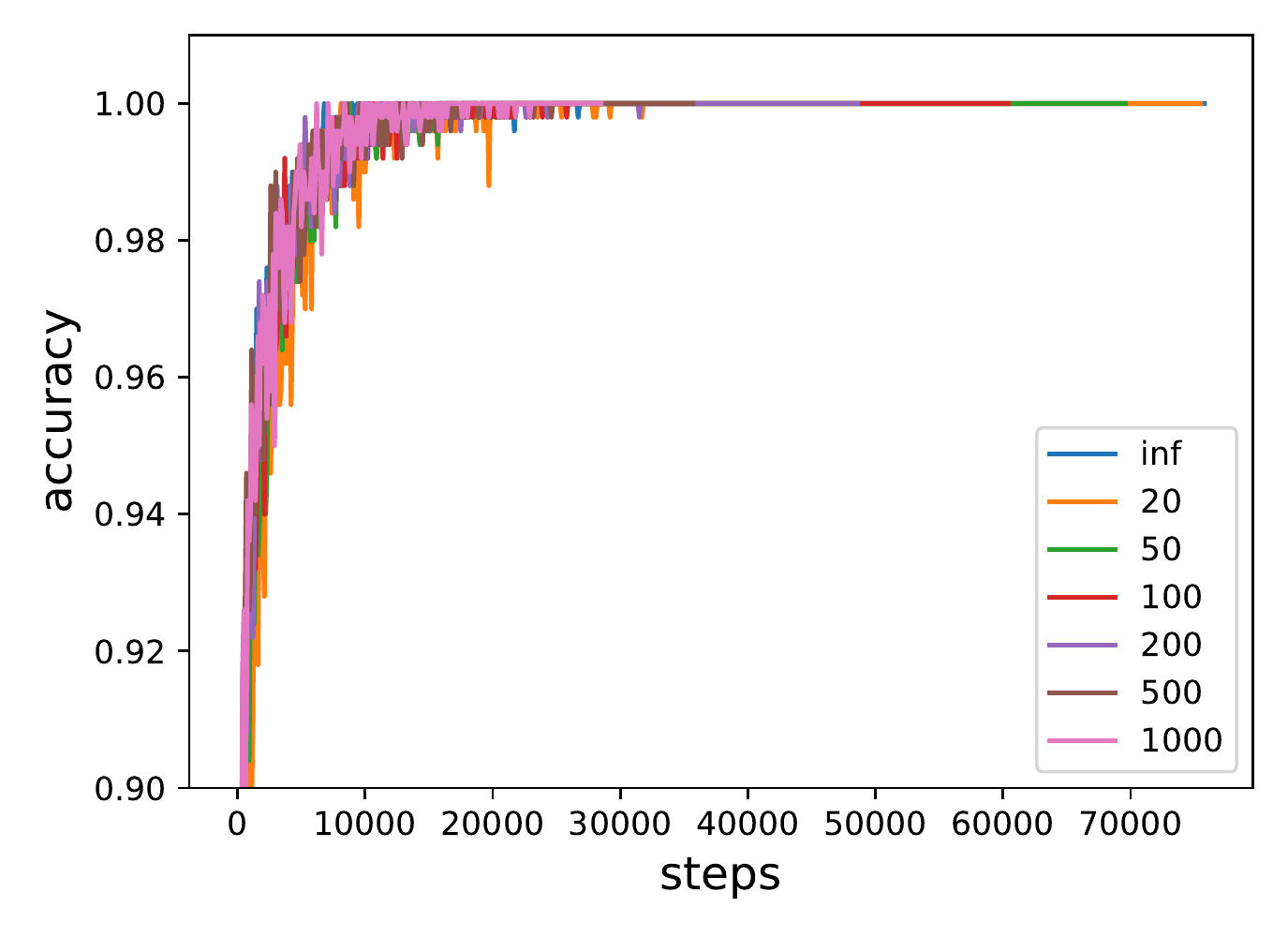}
        \caption{train accuracy}
        \label{fig:mnist_lr5k_metrics:train_acc}
    \end{subfigure}\hfill
    \begin{subfigure}[b]{0.25\textwidth}
        \includegraphics[width=\textwidth]{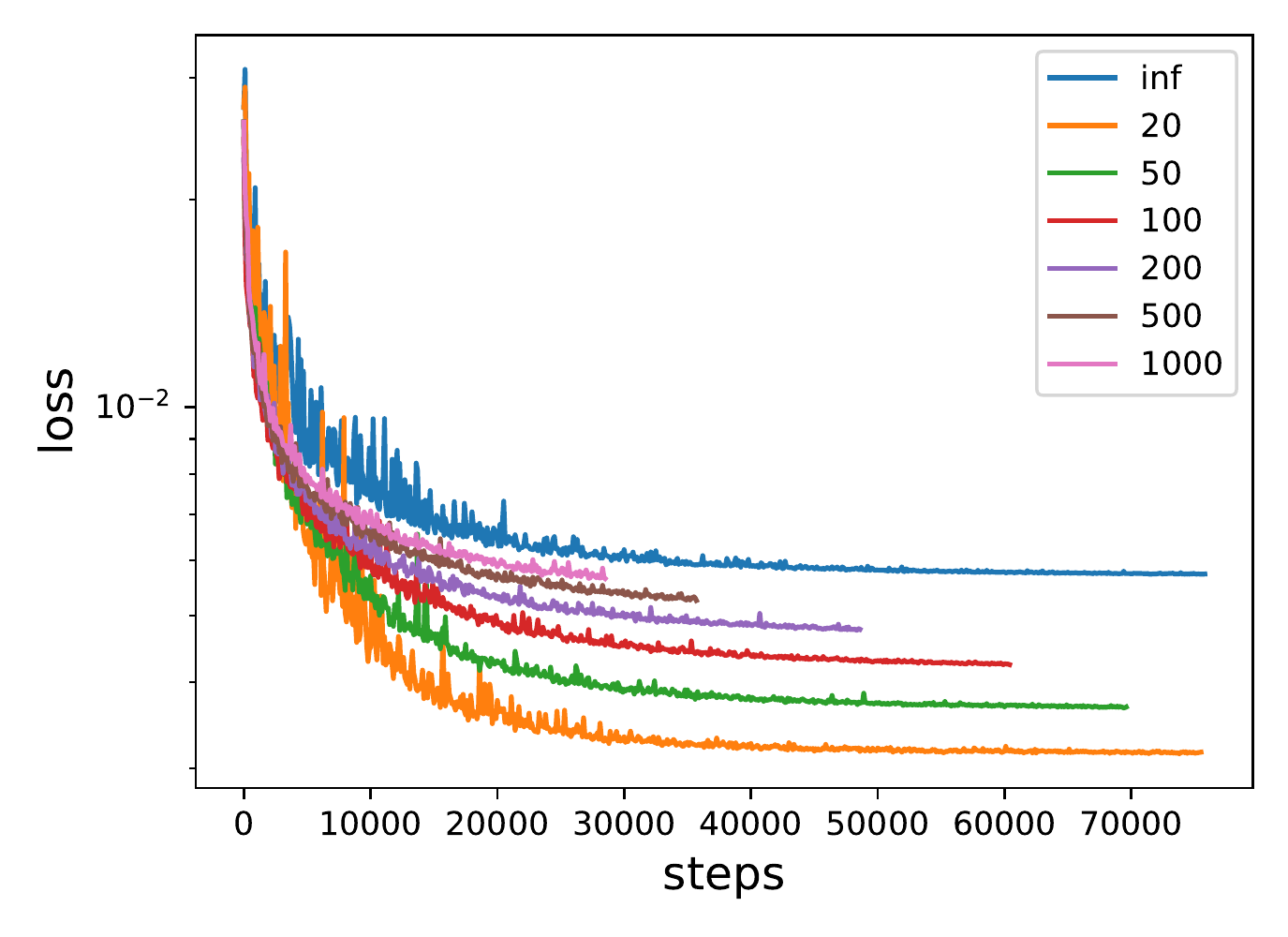}
        \caption{test loss}
        \label{fig:mnist_lr5k_metrics:test_loss}
    \end{subfigure}\hfill
    \begin{subfigure}[b]{0.25\textwidth}
        \includegraphics[width=\textwidth]{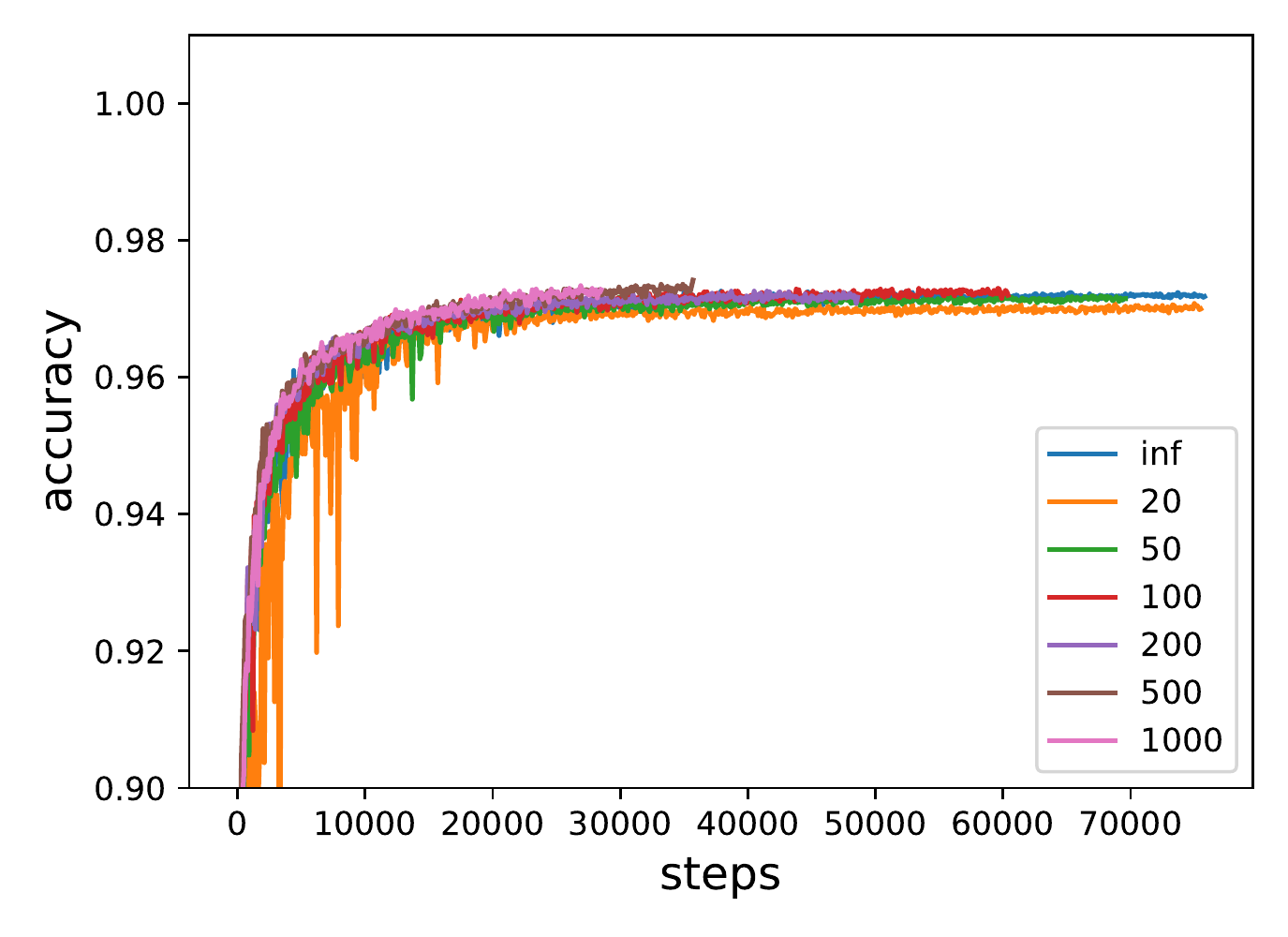}
        \caption{test accuracy}
        \label{fig:mnist_lr5k_metrics:test_acc}
    \end{subfigure}\hfill
    \caption{{MNIST dataset: training metrics for bottleneck models with widths from smallest to largest (inf indicates infinite width bottleneck)}}%
    \label{fig:mnist_lr5k_metrics}
\end{figure*}
We study the behavior of infinite width neural networks on MNIST dataset \cite{mnist} in this section. We train models on a multi-class (10-class) classification problem using the MSE loss in this experiment. The bottleneck widths used here range from ``no bottleneck'' (labeled as inf) to 1000 and are trained with SGD using a learning rate of 250. As with the previous section, we discuss implicit acceleration during optimization followed by test performance below:
\label{expt:data:mnist}
\paragraph{Loss and accuracy evolution} Figure~\ref{fig:mnist_lr5k_metrics} shows the evolution of training and test loss and accuracy over the course of training. We observe that training loss decays rapidly for networks with narrower bottlenecks. Figure~\ref{fig:mnist_lr5k_heatmap} (\cref{sec:exp}) further examines training and test loss evolution for the first 4000 steps, confirming that training accelerates as bottleneck size narrows. Note that while narrower bottleneck models exhibit lower test loss compared to their wider bottleneck counterparts, this does not hold for test accuracy, potentially due to a loss-metric mismatch as is commonly observed in practice \cite{pmlr-v97-huang19f}.
These results are inline with the analysis of linear networks with bottlenecks, that wide networks with bottlenecks accelerate training, and confirm empirically that this effect holds in practical nonlinear settings.
\begin{figure}[htb]
\centering
\begin{tabular}{cc}
\includegraphics[width=.5\columnwidth]{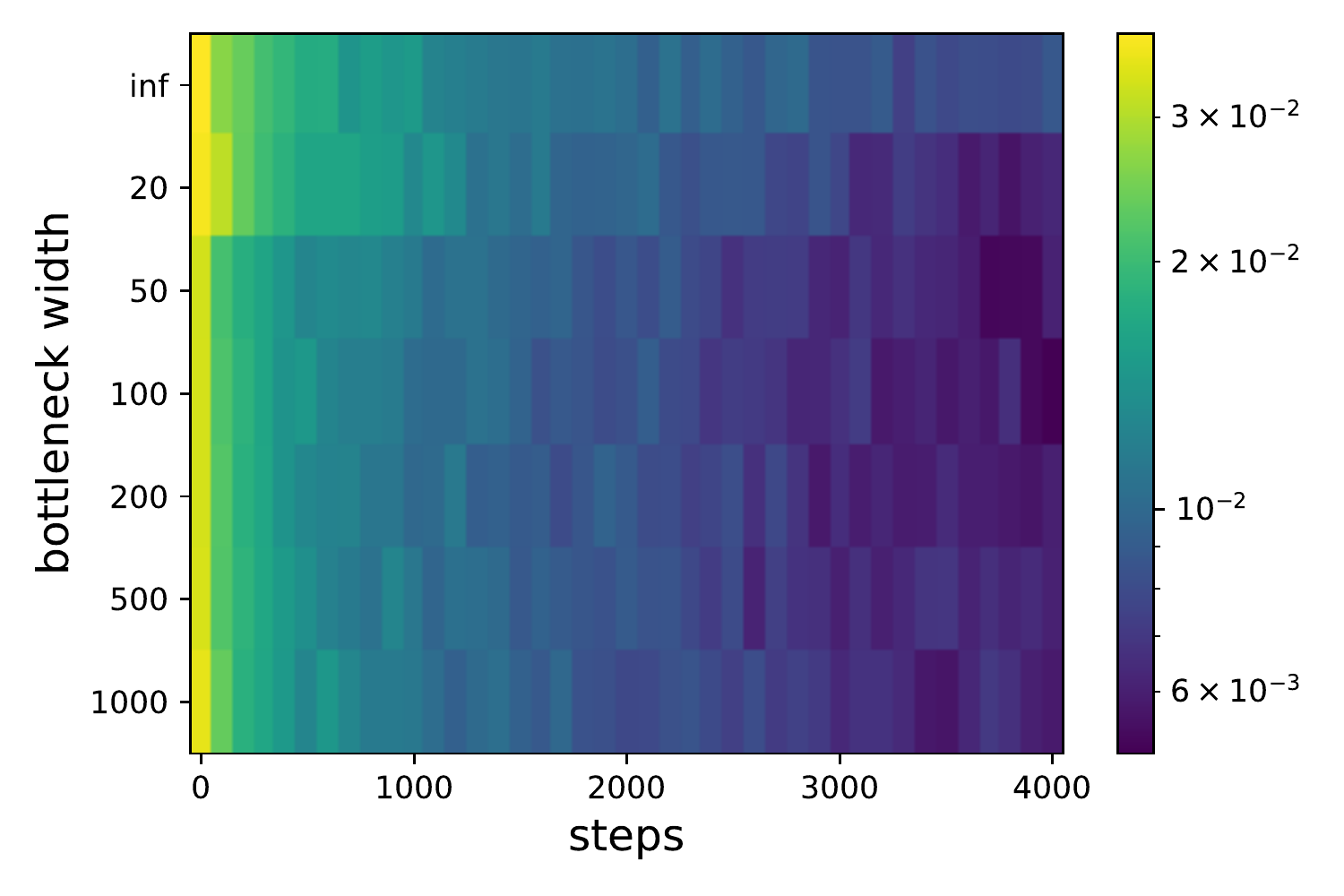}&
\includegraphics[width=.5\columnwidth]{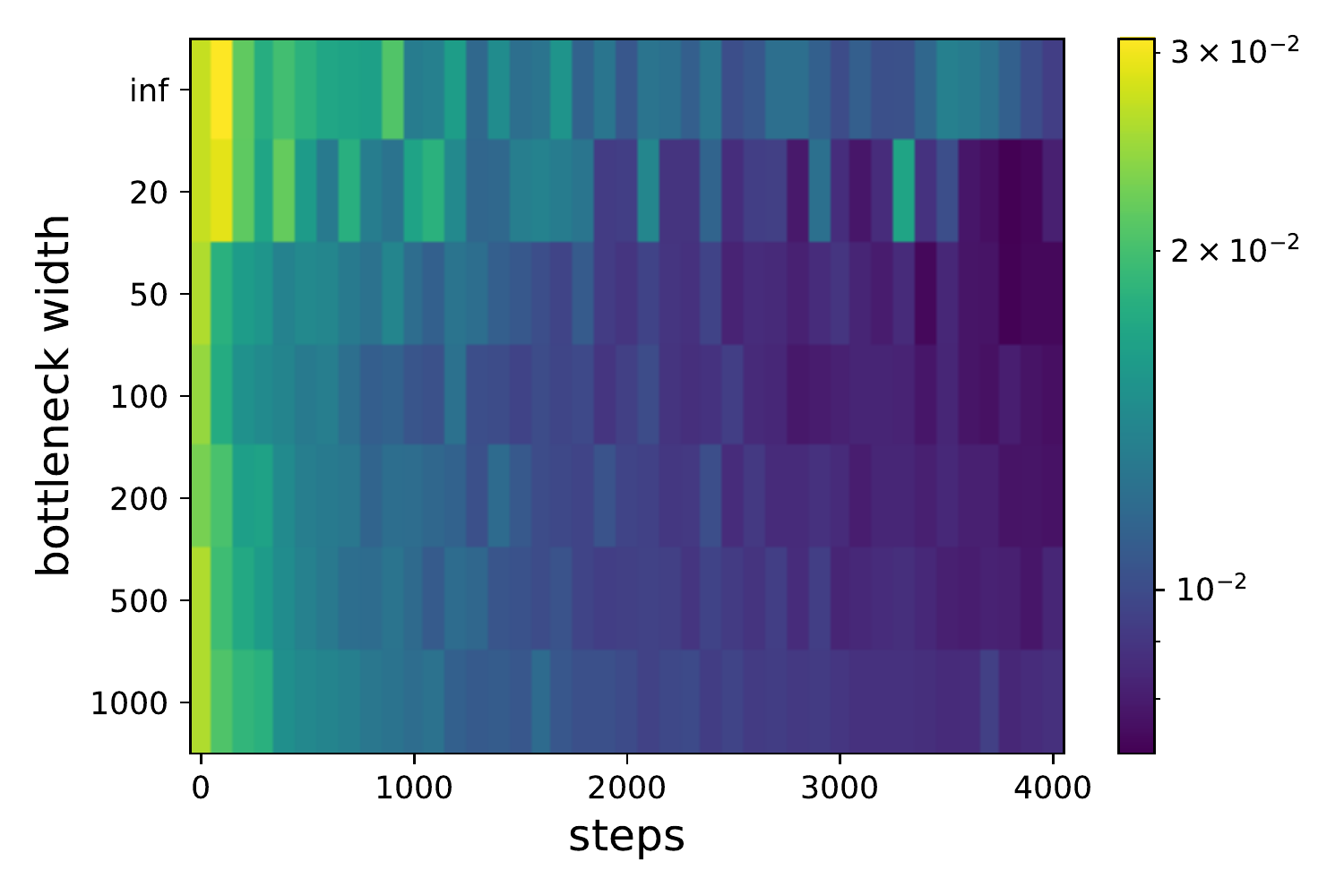}\\
(a) & (b)\\
\end{tabular}
  \caption{{MNIST dataset: heatmap view of (a) training loss evolution and (b) test loss evolution for the first 4000 steps}}%
  \label{fig:mnist_lr5k_heatmap}
\end{figure}

\begin{figure}[htb]
\centering
\includegraphics[width=0.5\columnwidth]{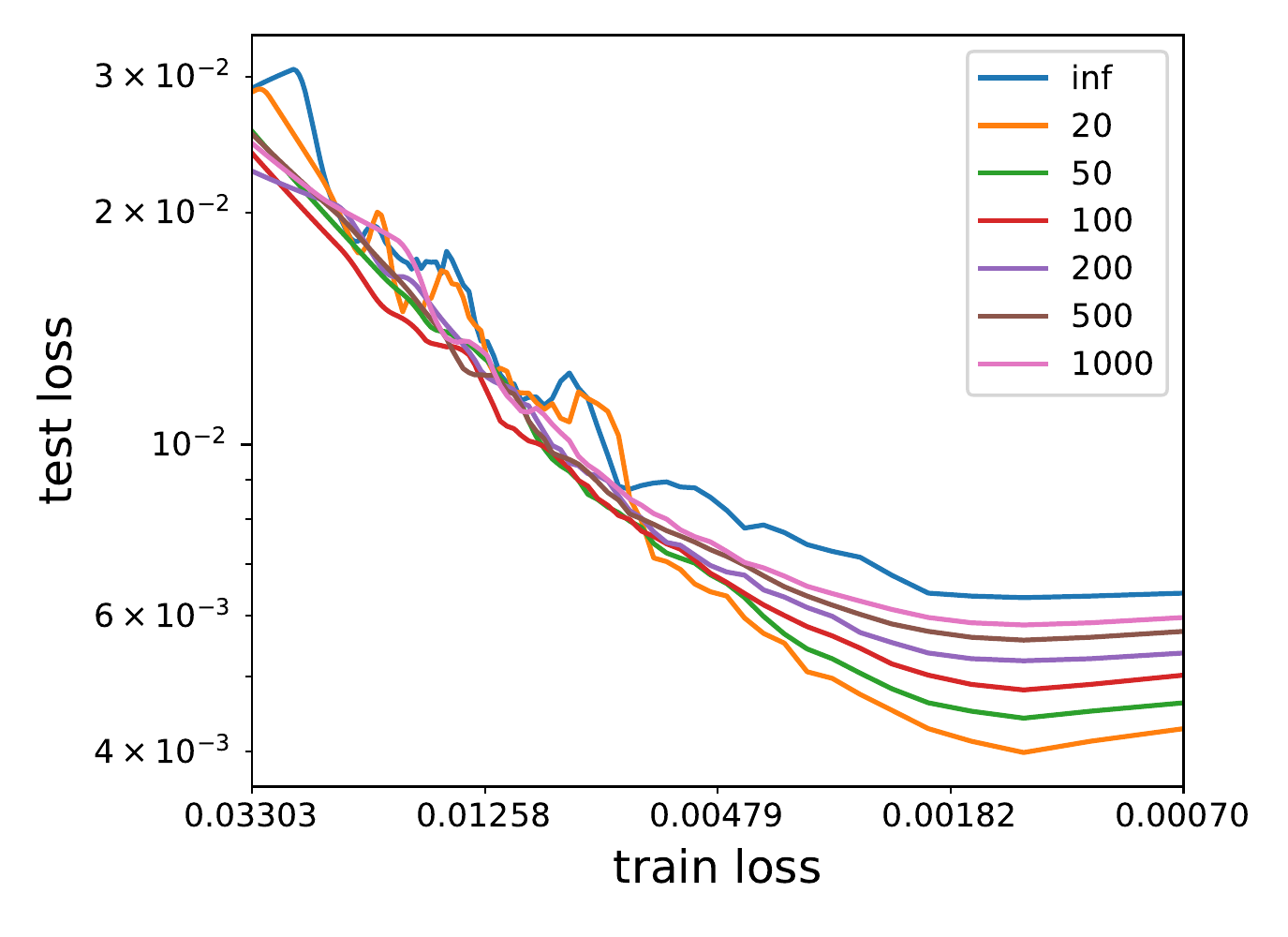}\\
  \caption{{MNIST dataset: training loss vs test loss plot}}%
  \label{fig:mnist_lr5k_tvt}
\end{figure}

\paragraph{Training vs test loss} Figure~\ref{fig:mnist_lr5k_tvt} shows a plot of training vs test loss for MNIST dataset from which we make the following observations:
\begin{itemize}
    \item finite width bottleneck models achieve lower test loss for a given training loss over infinite width bottlenecks.
    \item narrower bottlenecks attain lower test loss compared to their wider finite width counterparts for all widths considered for this dataset.
\end{itemize}
The above observations suggest that using bottlenecks in infinite width nonlinear models can lead to a lower test loss and hence improved test performance on real datasets as well. However, we repeat our previous observation that lower test loss does not always lead to an improvement in test accuracy. 

\subsubsection{CIFAR-10 dataset}
Finally, we study minimizing loss in function space on CIFAR-10 \cite{cifar} which is another instance of real world data. We study a binary classification problem using two classes from CIFAR-10 in this set of experiments. As before, we first present our findings on acceleration effect during training followed by observations on test performance. The models are trained with SGD using a learning rate of 1000 in these experiments.
\label{expt:data:cifar10}
\paragraph{Loss and accuracy evolution} Figure~\ref{fig:cifar10_lr1k_metrics} shows training loss, training accuracy, test loss, test accuracy for experiments on CIFAR10 dataset.  Figure~\ref{fig:cifar10_lr1k_heatmap} shows the evolution of training loss for 4000 steps and clearly illustrates the acceleration effect in finite bottleneck models. The accelerated learning effect is stronger in narrower bottleneck architectures. Interestingly, we observe that lower loss value leads to better test accuracy in this dataset unlike our observation for MNIST in \cref{expt:data:mnist}.

\paragraph{Training vs test loss} We make the following observations from figure~\ref{fig:cifar10_lr1k_tvt} that shows a plot of training vs test loss for CIFAR-10 dataset:
\begin{itemize}
    \item Larger finite width bottleneck models attain a lower test loss compared to infinite width bottleneck model.
    \item Larger finite width bottleneck models achieve a lower test loss compared to their narrower width counterparts.
\end{itemize}
\begin{figure*}
    \begin{subfigure}[b]{0.25\textwidth}
        \includegraphics[width=\textwidth]{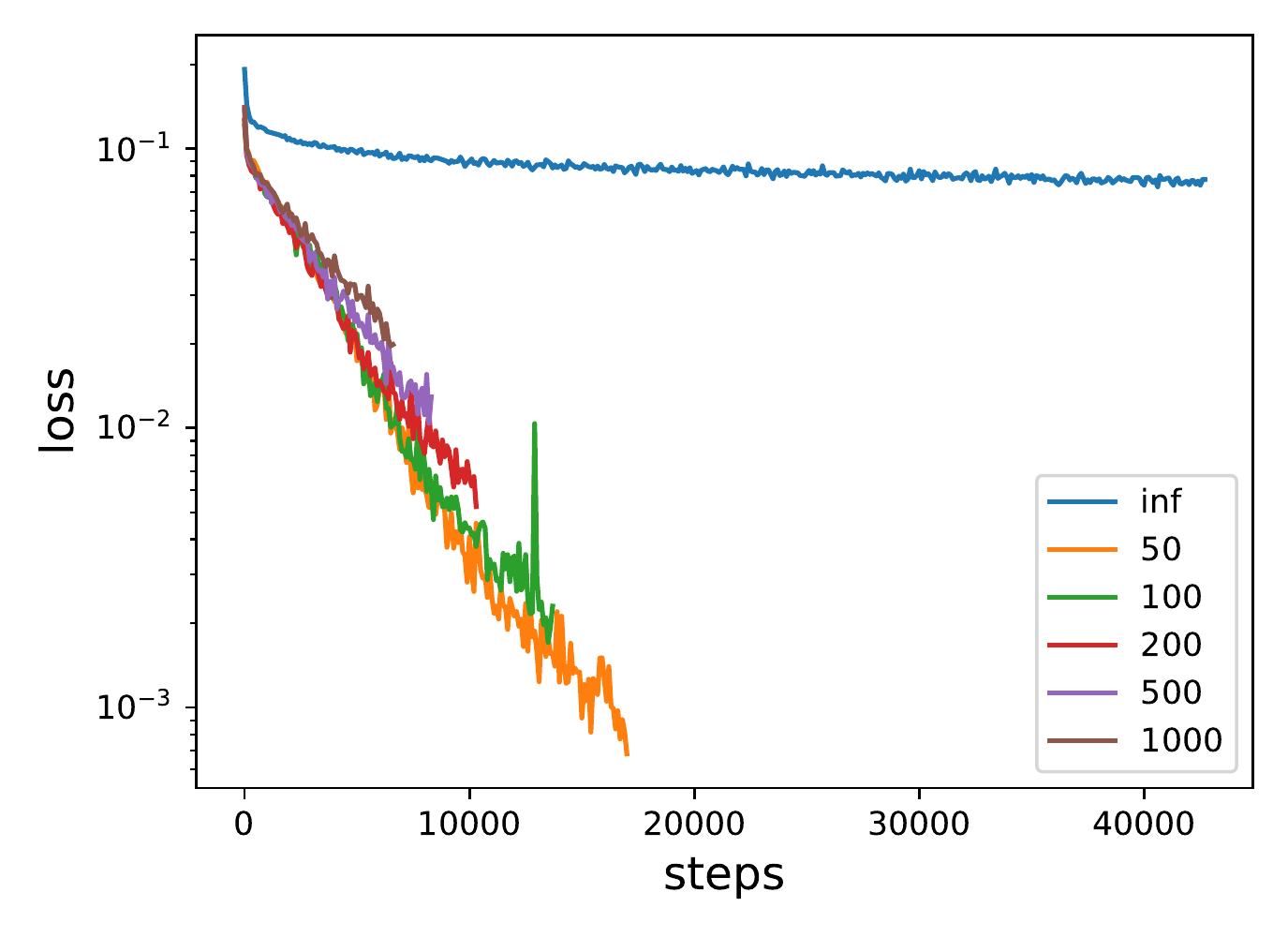}
        \caption{train loss}
        \label{fig:cifar10_lr1k_metrics:train_loss}
    \end{subfigure}\hfill
    \begin{subfigure}[b]{0.25\textwidth}
        \includegraphics[width=\textwidth]{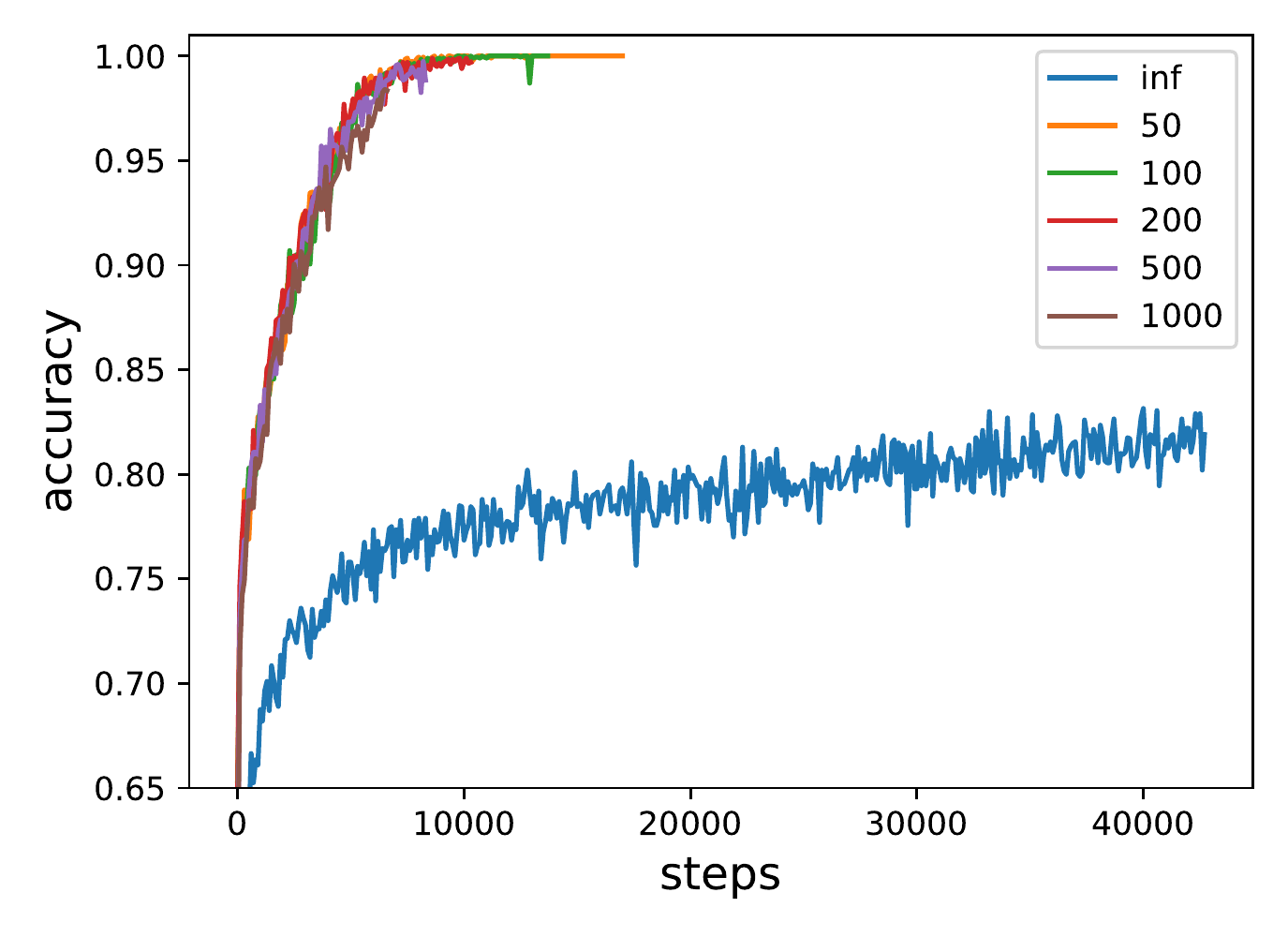}
        \caption{train accuracy}
        \label{fig:cifar10_lr1k_metrics:train_acc}
    \end{subfigure}\hfill
    \begin{subfigure}[b]{0.25\textwidth}
        \includegraphics[width=\textwidth]{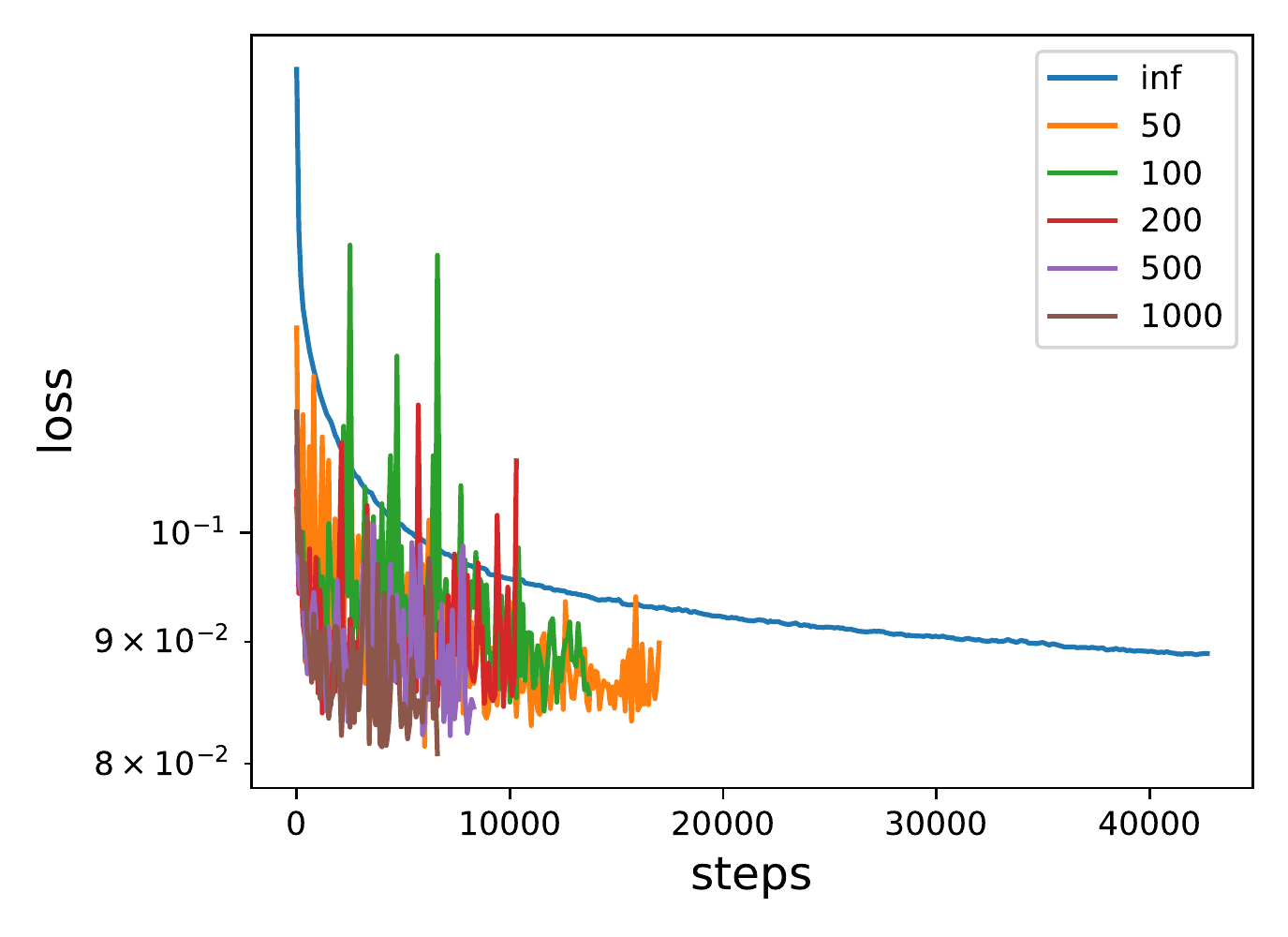}
        \caption{test loss}
        \label{fig:cifar10_lr1k_metrics:test_loss}
    \end{subfigure}\hfill
    \begin{subfigure}[b]{0.25\textwidth}
        \includegraphics[width=\textwidth]{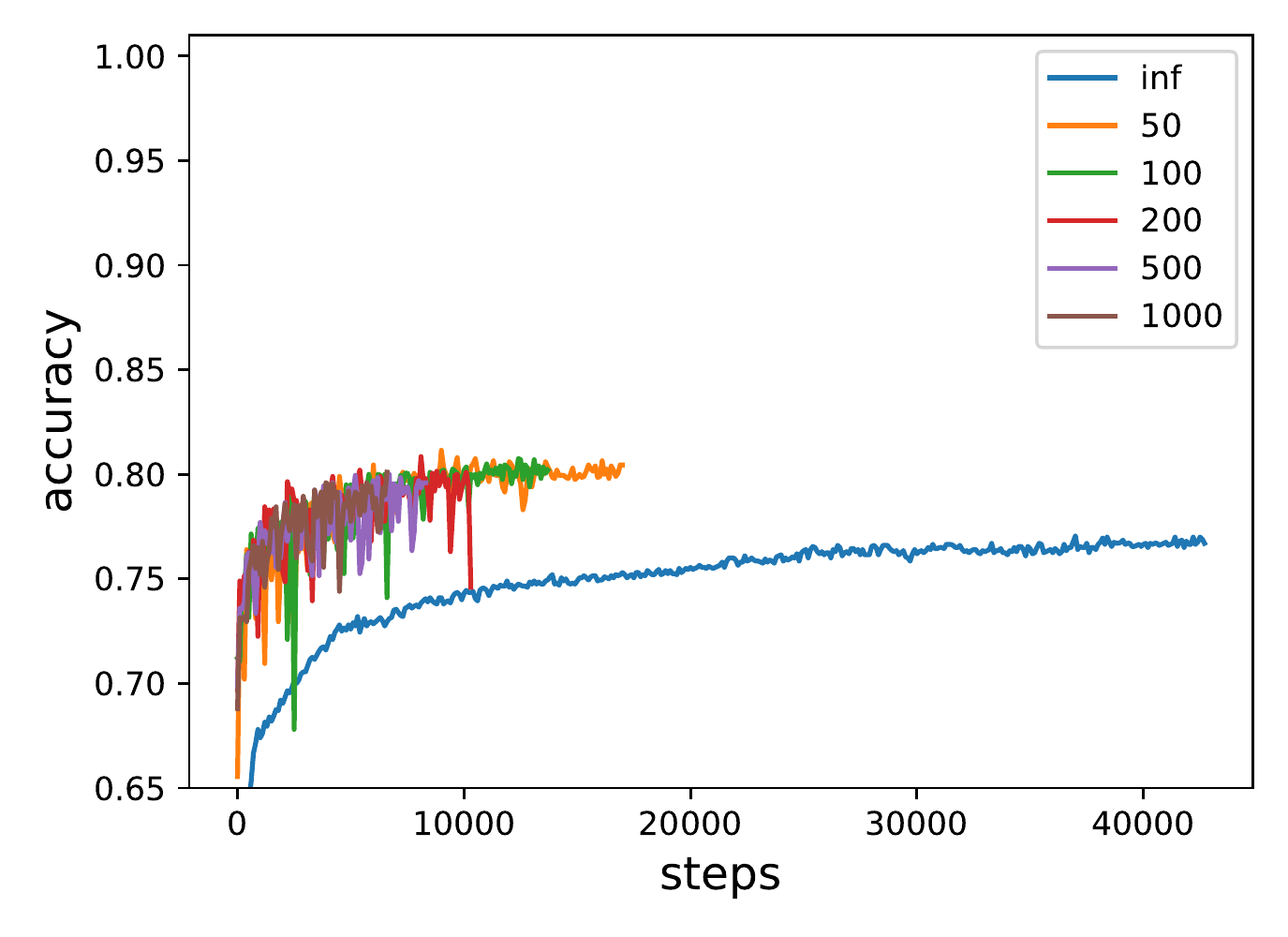}
        \caption{test accuracy}
        \label{fig:cifar10_lr1k_metrics:test_acc}
    \end{subfigure}\hfill
    \caption{{CIFAR-10 dataset: training metrics for bottleneck models with widths from smallest to largest (inf indicates infinite width bottleneck)}}%
    \label{fig:cifar10_lr1k_metrics}
\end{figure*}

\begin{figure}[ht]
\centering
\begin{tabular}{cc}
\includegraphics[width=.5\columnwidth]{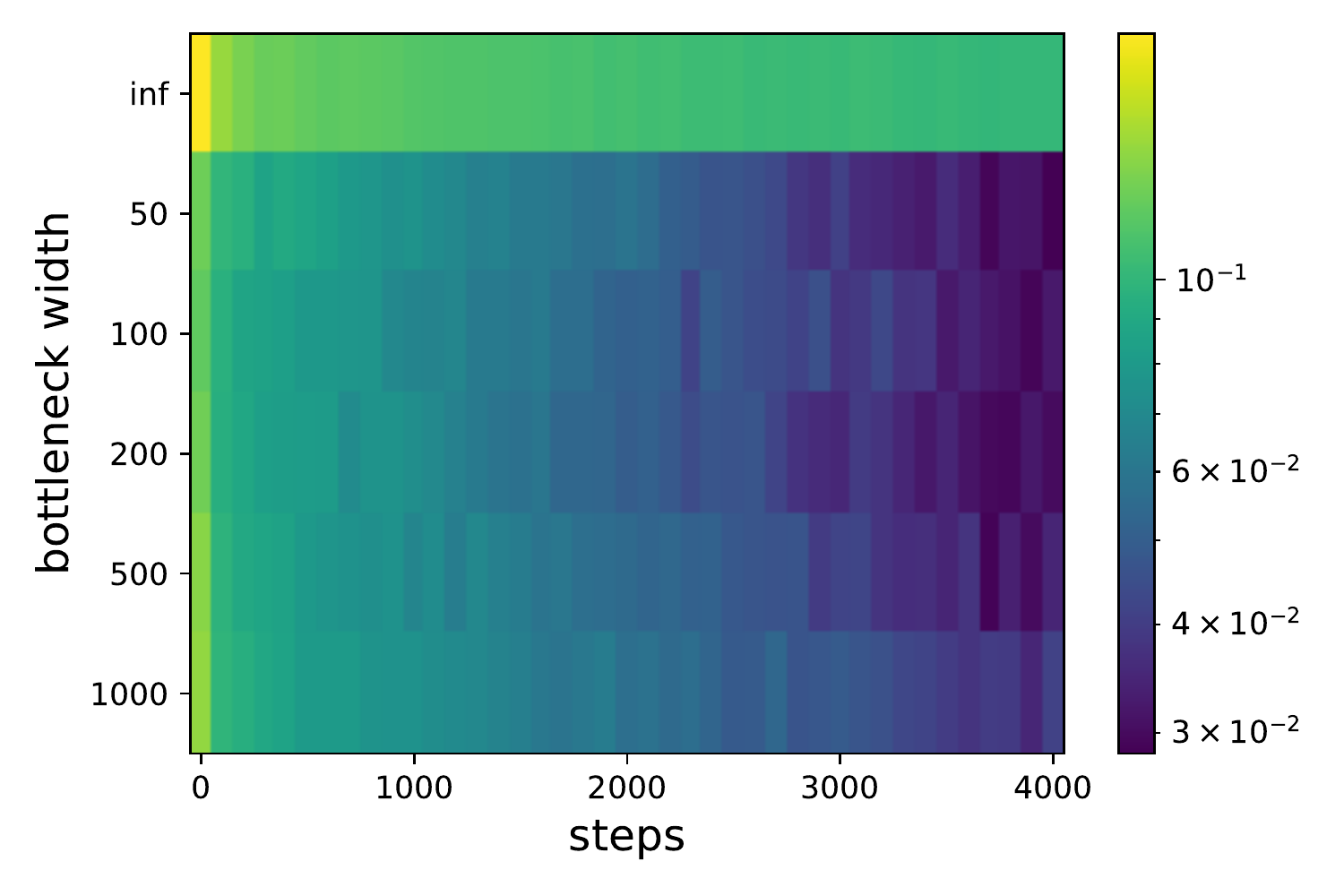}&
\includegraphics[width=.5\columnwidth]{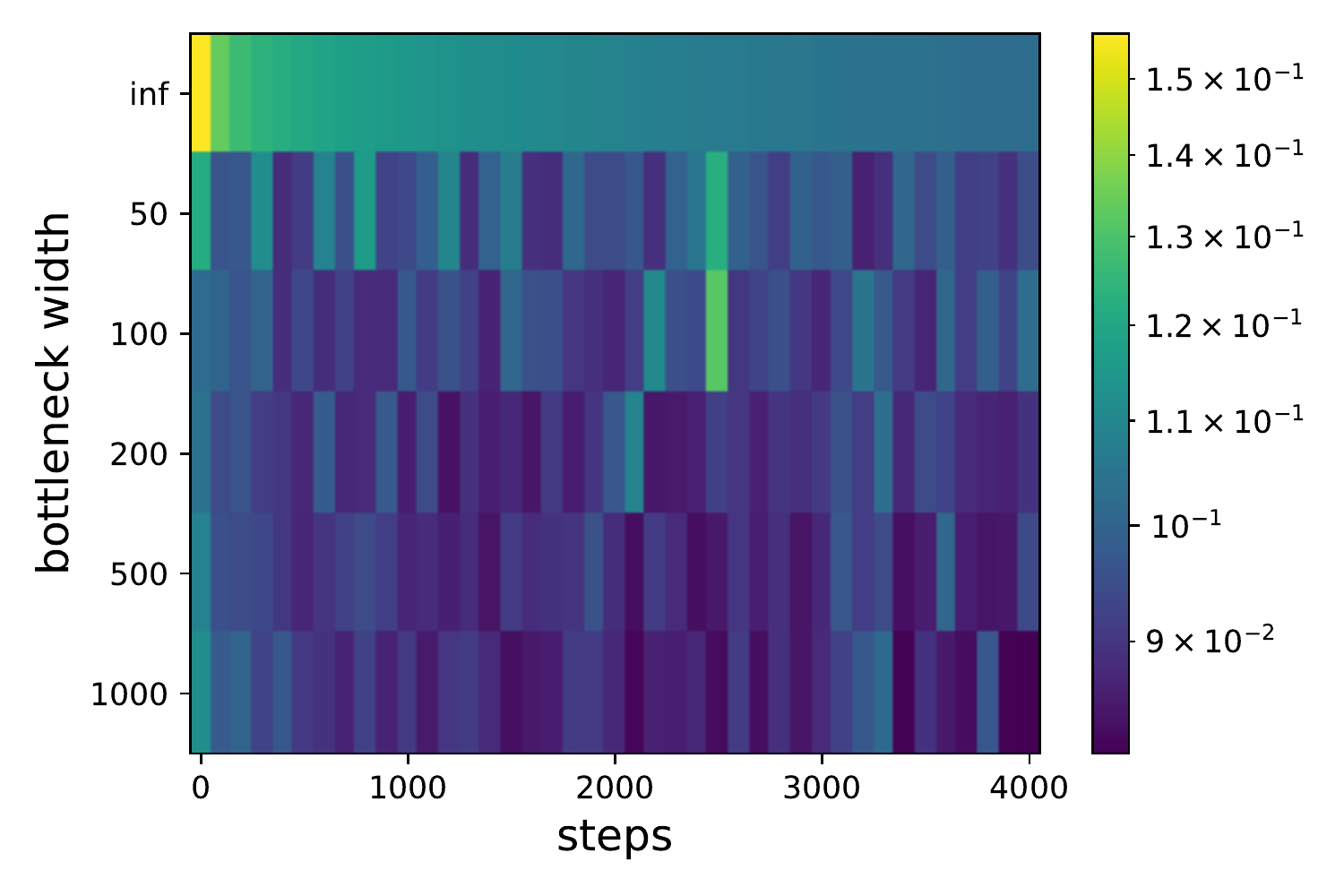}\\
(a) & (b)\\
\end{tabular}
  \caption{{CIFAR-10 dataset: heatmap view of (a) training loss evolution and (b) test loss evolution for the first 4000 steps}}%
  \label{fig:cifar10_lr1k_heatmap}
\end{figure}

\begin{figure}[!t]
\centering
\includegraphics[width=.5\columnwidth]{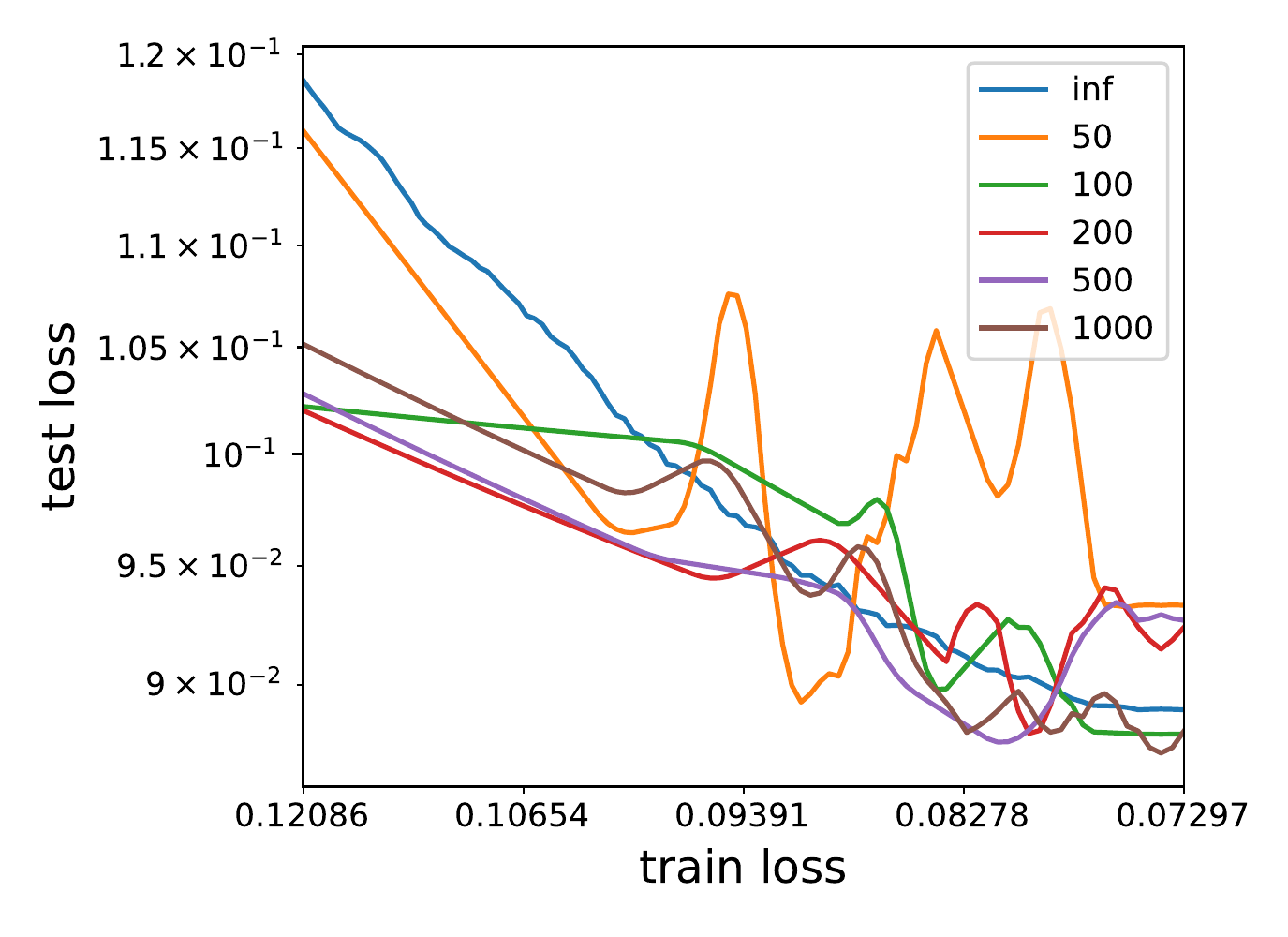}\\
  \caption{{CIFAR-10 dataset: training loss vs test loss plot}}%
  \label{fig:cifar10_lr1k_tvt}
\end{figure}





\section{Experimental verification of the theory}\label{verify}
Here we present results of a number of numerical simulations, serving as sanity checks on the theoretical results at finite but large $n$.  In order to numerically verify the results stated in \cref{lemma:GP}, i.e., Jacobian's GP behavior at initialization, we numerically estimate the Jacobian covariance and its cross- covariance with $f$ for a two-layer \text{ReLU} network with a large number of hidden units $n$ by averaging over initializations (100K for $J-J$, 30K for $J-f$ and $f-f$ each). Deviation of the empirical Jacobian covariances from their theoretical counterparts, given by \cref{JJ:cov}, \cref{fJ:cov} and \cref{sig1:sig2:def}, is subsequently measured for randomly generated input pairs as $\lVert cov_{empirical}-cov_{theory}\rVert_{F}/\lVert cov_{theory} \rVert_{F}$. Each plot in Figure \cref{fig:jj_cov_devs} shows deviations of a given covariance structure for up to 50 randomly generated input data pairs of dimension two and for different values of (large) $n$.\footnote{The main objective here is to only verify deviations/fractional errors are small for large $n$. In particular, we do not attempt to numerically establish convergence as a function of $n$.}

Next, we shift our attention to showing the evolution equations in \cref{{thm:main}} hold numerically for sample parameter values. The following direct strategy is used: We train four-layer \text{ReLU} networks with finite bottlenecks and large number of hidden units $n$, with architecture described in \cref{sec:bottle}, using full batch gradient descent (\text{GD}) and a small learning rate $lr=10^{-3}$ (to approximate a gradient flow) on a regression task with an \text{L2}-loss. Training data is synthetically generated. Instances are taken to be standard normal-distributed, while labels are assigned by a fixed random projection of the training data. In order to numerically track the network in the functional space during training, we checkpoint values of the network function $f$, $\chi$ (loss response) and the bottleneck embedding $g$, all evaluated on the training instances, as well as the Jacobian $J$ (for a fixed sample bottleneck embedding) at every training step during \text{GD}. The idea is to numerically show these snapshots satisfy the concentrated evolution equations \cref{eqn:eq1}, \cref{eqn:eq2} and \cref{eqn:eq3} at their corresponding time steps, if the \text{NTK} limiting \text{ReLU} kernels $\mathcal{K}$ and $\Theta$ are used and time derivatives in the evolution equations are discretized by forward finite differences (with time step size set to be the learning rate). Equivalently, we plot the corresponding residual error per step. Figure (\ref{fig:eom:res:errors}) summarizes the simulation results for a choice of parameters.  Each point in the Figure (\ref{fig:eom:res:errors}) plots is a residual error measured at a time step $t$ as follows $\lVert (\psi[t+1]-\psi[t])/lr-EQ_{\psi}^{rhs}[t]\rVert/\lVert EQ_{\psi}^{rhs}[t]\rVert$, where $\psi\in\{f, g, J\}$.\footnote{Besides finite $n$, continuous time discretization error (finite learning rate) contributes to the error.} Color-coding corresponds to different training samples on which the functional evolution equations were evaluated.

To verify the main result in \cref{lemma:linear}, we train two linear networks with different architectures of the type described in \cref{sec:implicit} for sample parameter values. One of the networks is taken to have $f_n$ and $g_n$ maps as linear \text{MLPs} of depth two. The second network has the simple form $\F(\xi) = w_{\text{eff}}\theta_\text{eff} \xi$ (as if the depth-two components of the first network are collapsed into one {\it effective} each). Similar to the previous case, we record the two networks dynamics in functional space during \text{GD}. We then numerically show the evolution equations \cref{eqn:eq1}, \cref{eqn:eq2} and \cref{eqn:eq3} are satisfied by the empirical trajectories in the functional space, when the appropriate NTK kernels for each {\it linear} network is used.  \cref{fig:eom:res:errors:4-layer:lin} and \cref{fig:eom:res:errors:2-layer:lin} show the evolution equations' residual error for the four and two layer linear networks described above, where kernels $\Theta(\xi,\tilde{\xi}) = L_g\frac{\xi^\top \tilde{\xi}}{d_0}, \mathcal{K}(x,\tilde{x}) = L_f\frac{x^\top \tilde{x}}{d}$ for $L_f=L_g=2$ and $\Theta(\xi,\tilde{\xi}) = \frac{\xi^\top \tilde{\xi}}{d_0}, \mathcal{K}(x,\tilde{x}) = \frac{x^\top \tilde{x}}{d}$ were used in the evolution equations respectively. The learning rate was set to $10^{-4}$. 

\begin{figure}[ht]
\centering
\begin{tabular}{cc}
\includegraphics[width=.3\columnwidth]{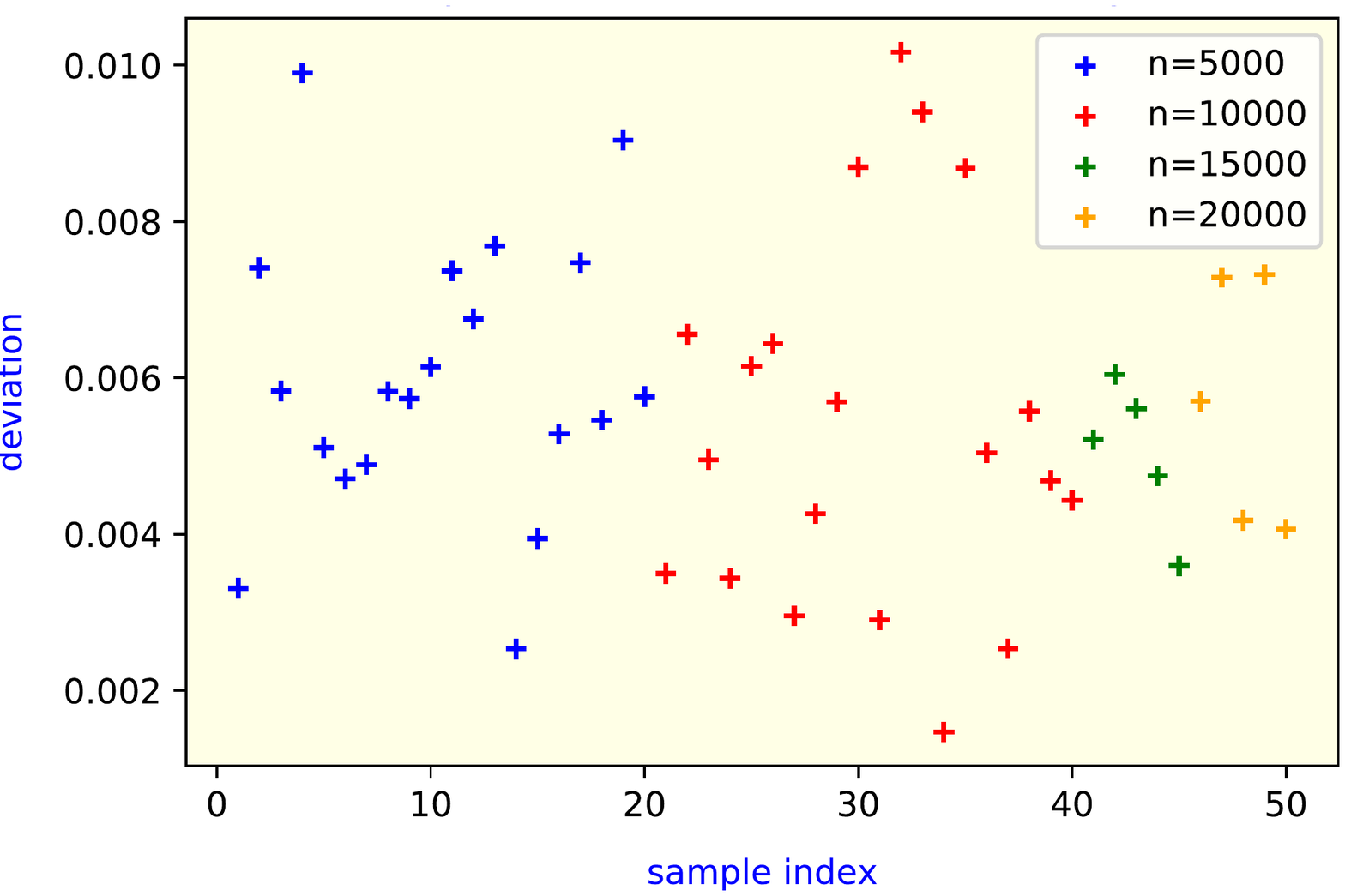}&
\includegraphics[width=.3\columnwidth]{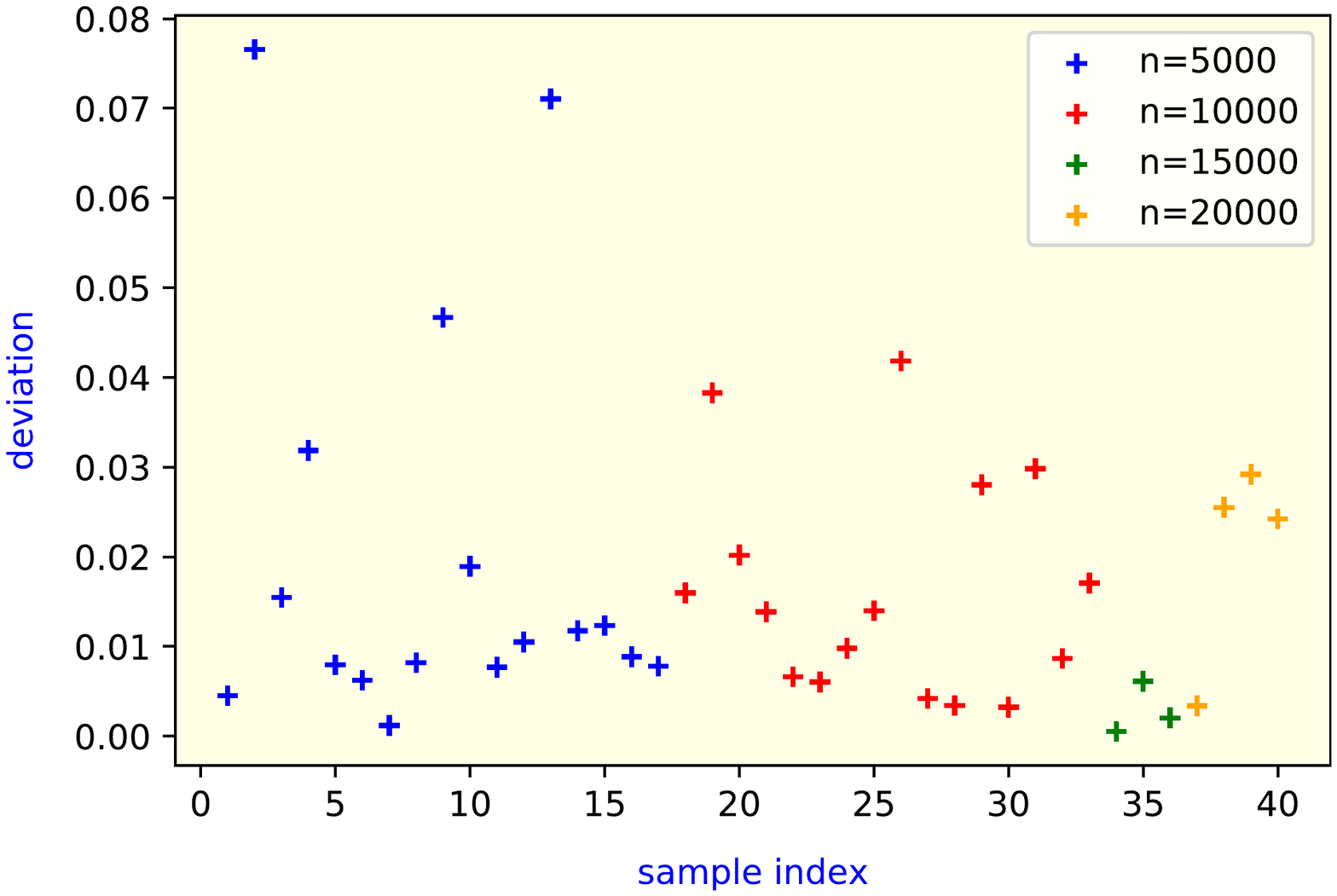}
\includegraphics[width=.3\columnwidth]{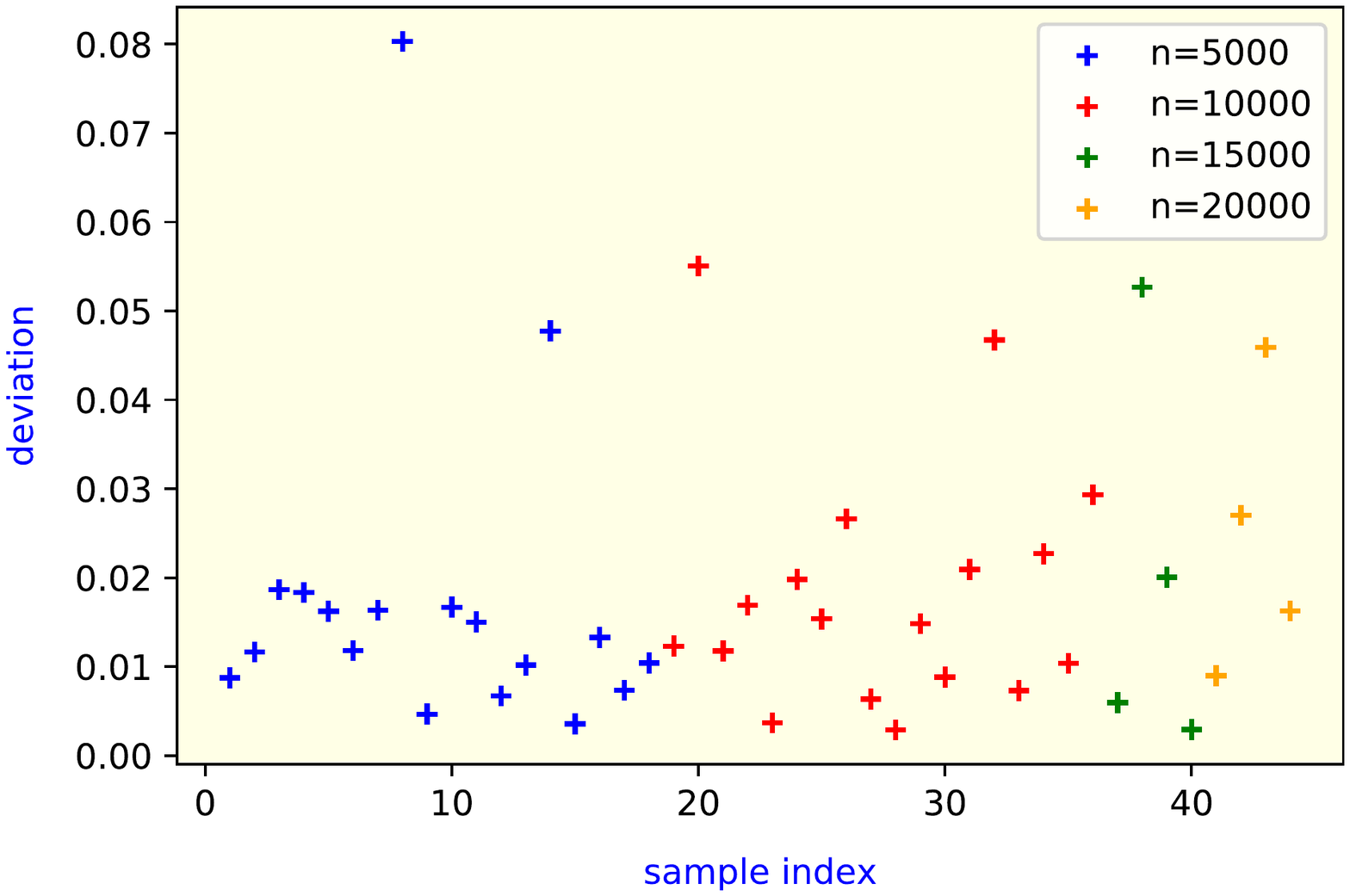}\\
(a) & (b)\\
\end{tabular}
  \caption{{Deviation of the experimental covariances from their theoretical predictions in (\ref{JJ:cov}) for pairs of random input samples for (a) $cov_{JJ}$ (b) $cov_{ff}$ and (c) $cov_{Jf}$. See \cref{verify} for details.}}%
  \label{fig:jj_cov_devs}
\end{figure}

\begin{figure}[ht]
\centering
\begin{tabular}{cc}
\includegraphics[width=.3\columnwidth]{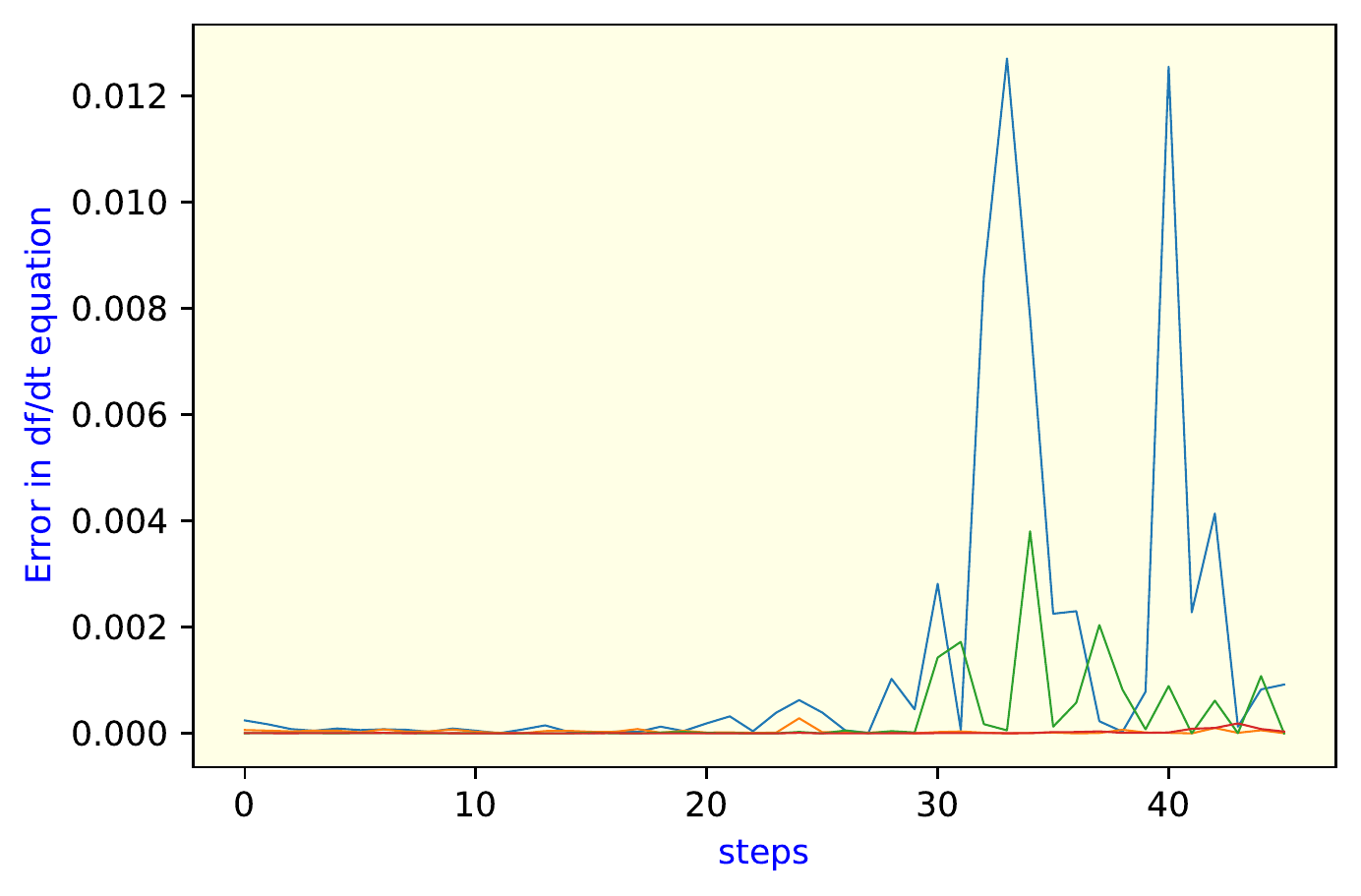}&
\includegraphics[width=.3\columnwidth]{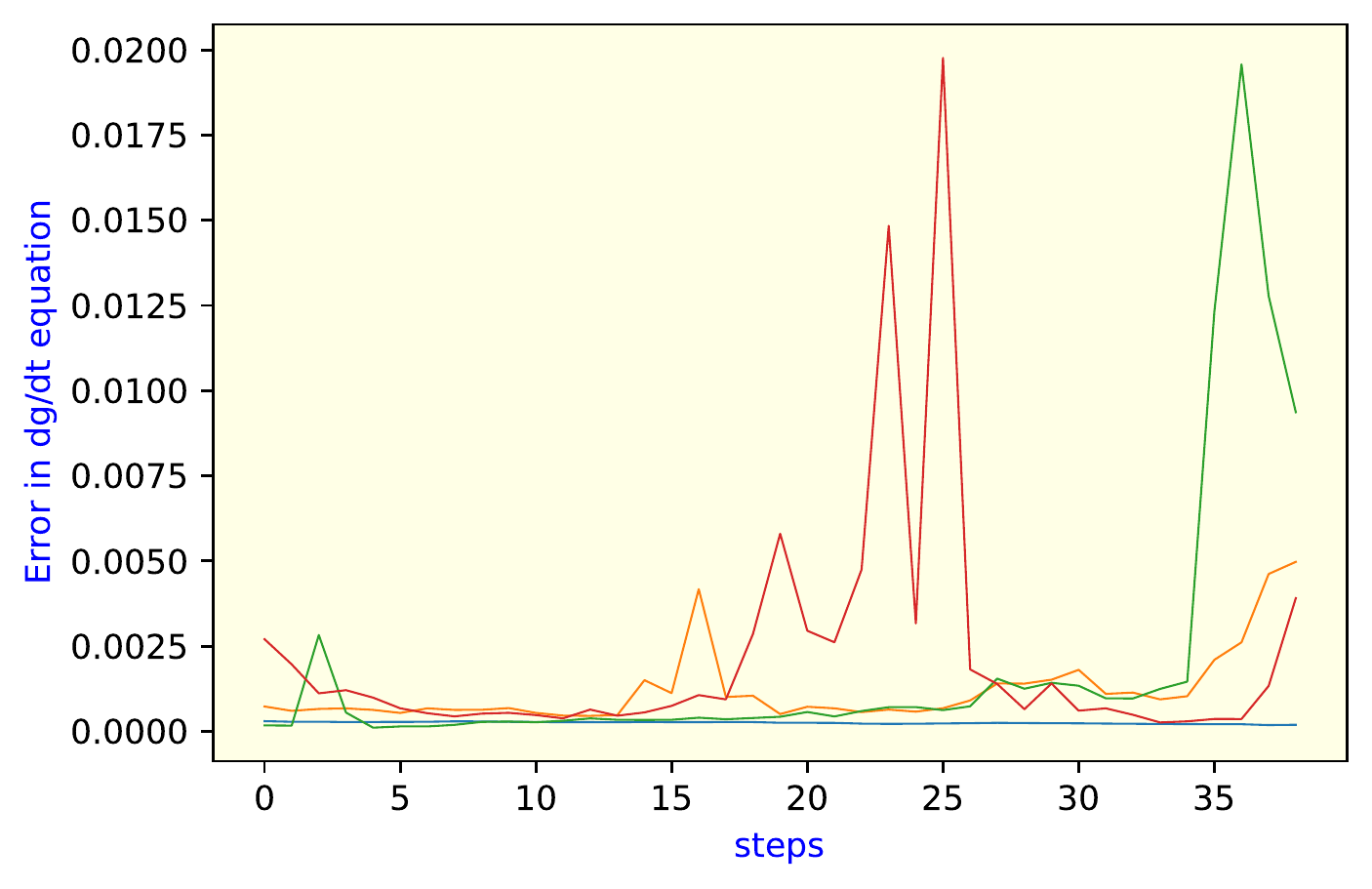}
\includegraphics[width=.3\columnwidth]{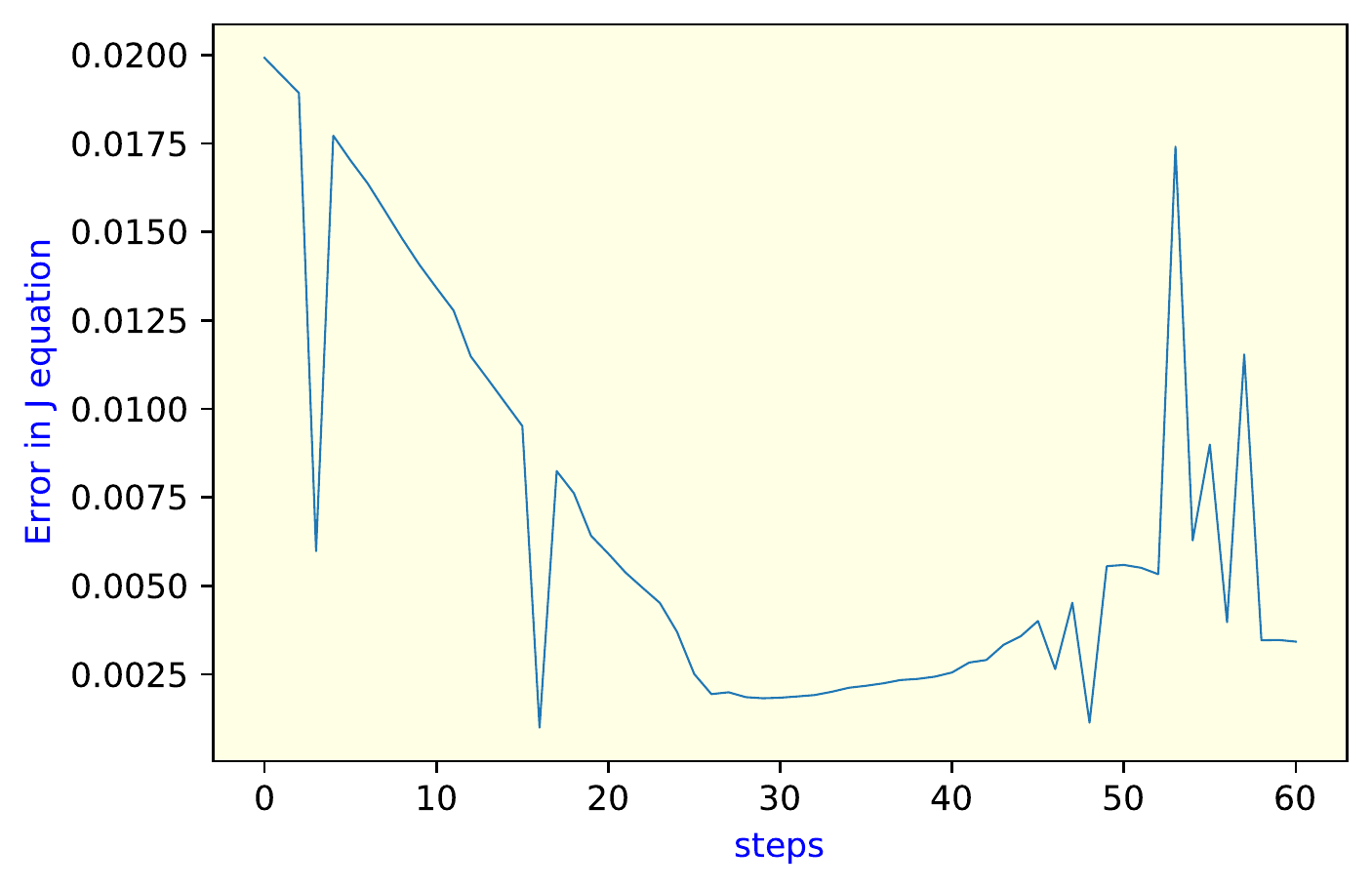}\\
(a) & (b)\\
\end{tabular}
  \caption{{Residual error for $f$, $g$ and $J$ evolution equations for $n=50K$ and a bottleneck size of three. See \cref{verify} for details.}}%
  \label{fig:eom:res:errors}
\end{figure}

\begin{figure}[ht]
\centering
\begin{tabular}{cc}
\includegraphics[width=.3\columnwidth]{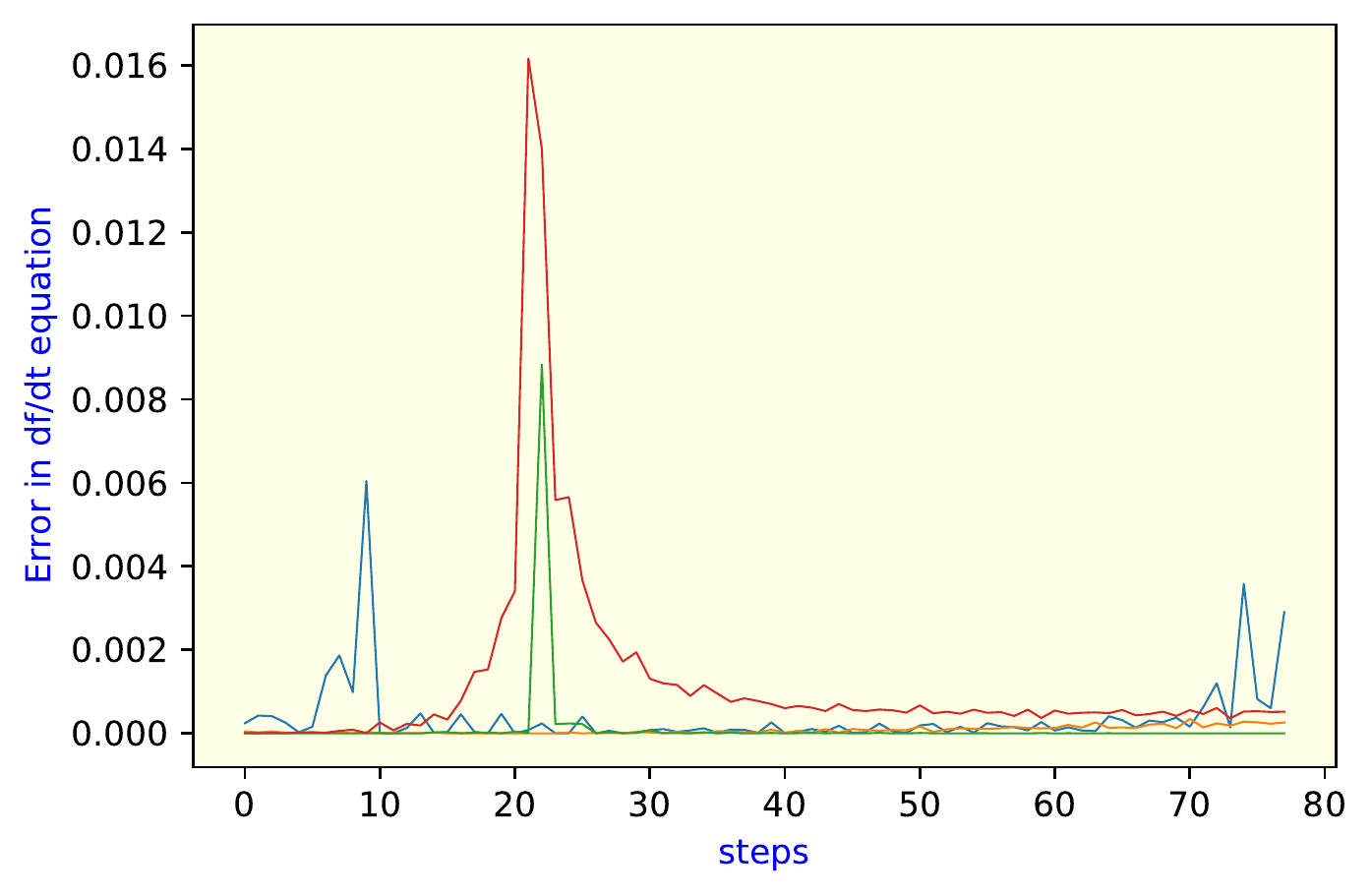}&
\includegraphics[width=.3\columnwidth]{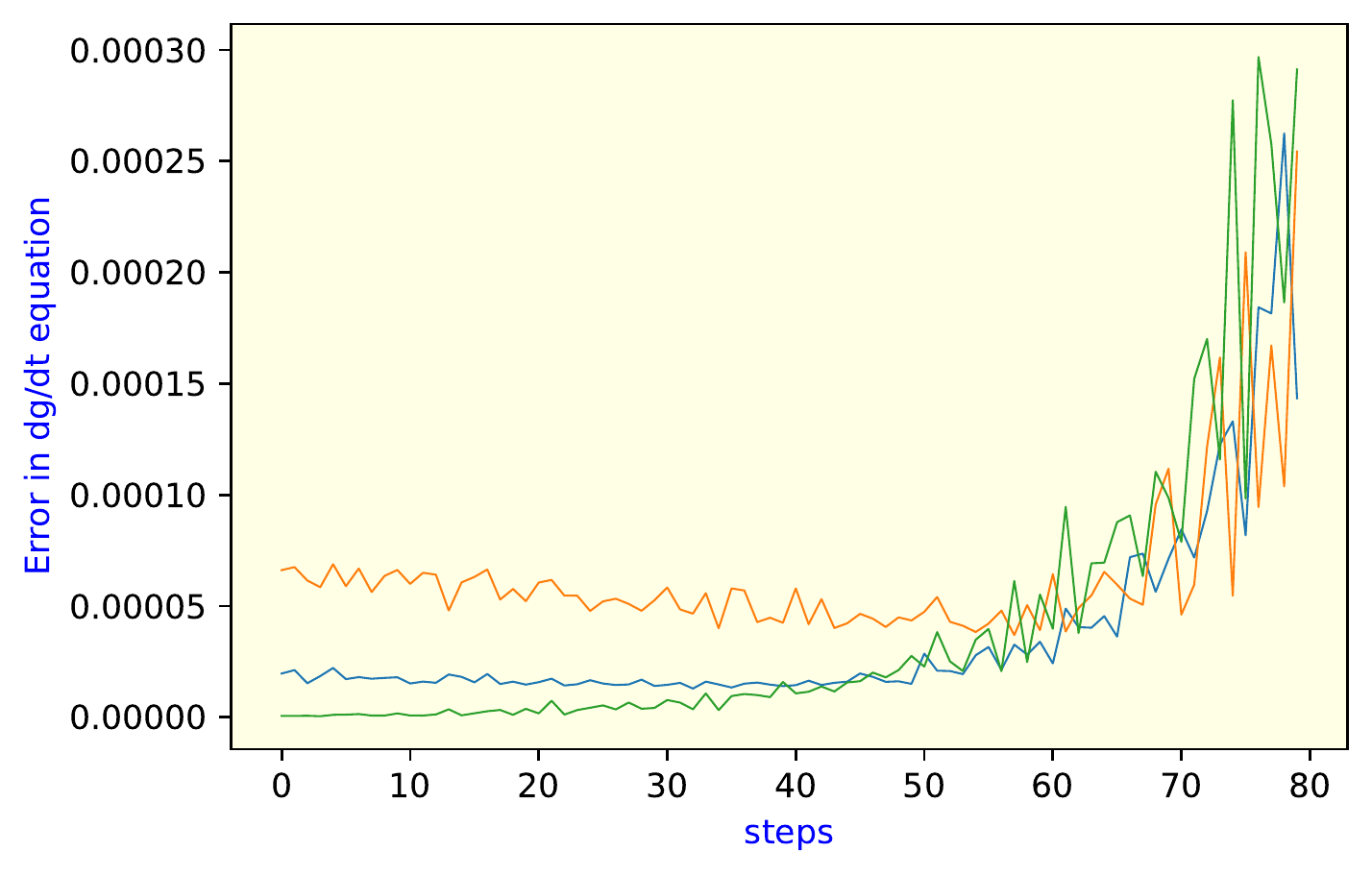}
\includegraphics[width=.3\columnwidth]{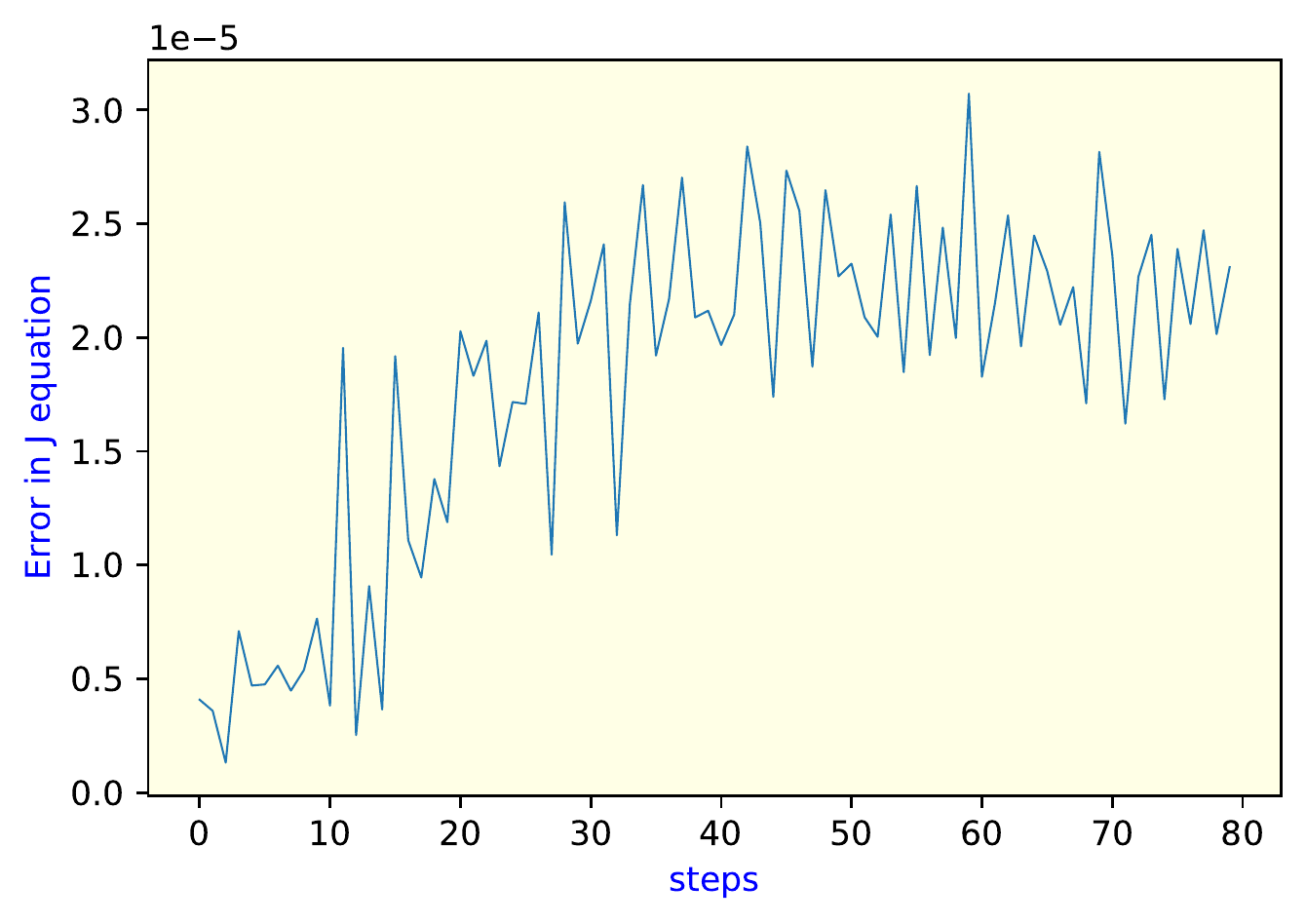}\\
(a) & (b)\\
\end{tabular}
  \caption{{Residual error for $f$, $g$ and $J$ evolution equations for a four-layer linear network with $n=50K$ and a bottleneck size of three. See \cref{verify} for details..}}%
  \label{fig:eom:res:errors:4-layer:lin}
\end{figure}

\begin{figure}[ht]
\centering
\begin{tabular}{cc}
\includegraphics[width=.3\columnwidth]{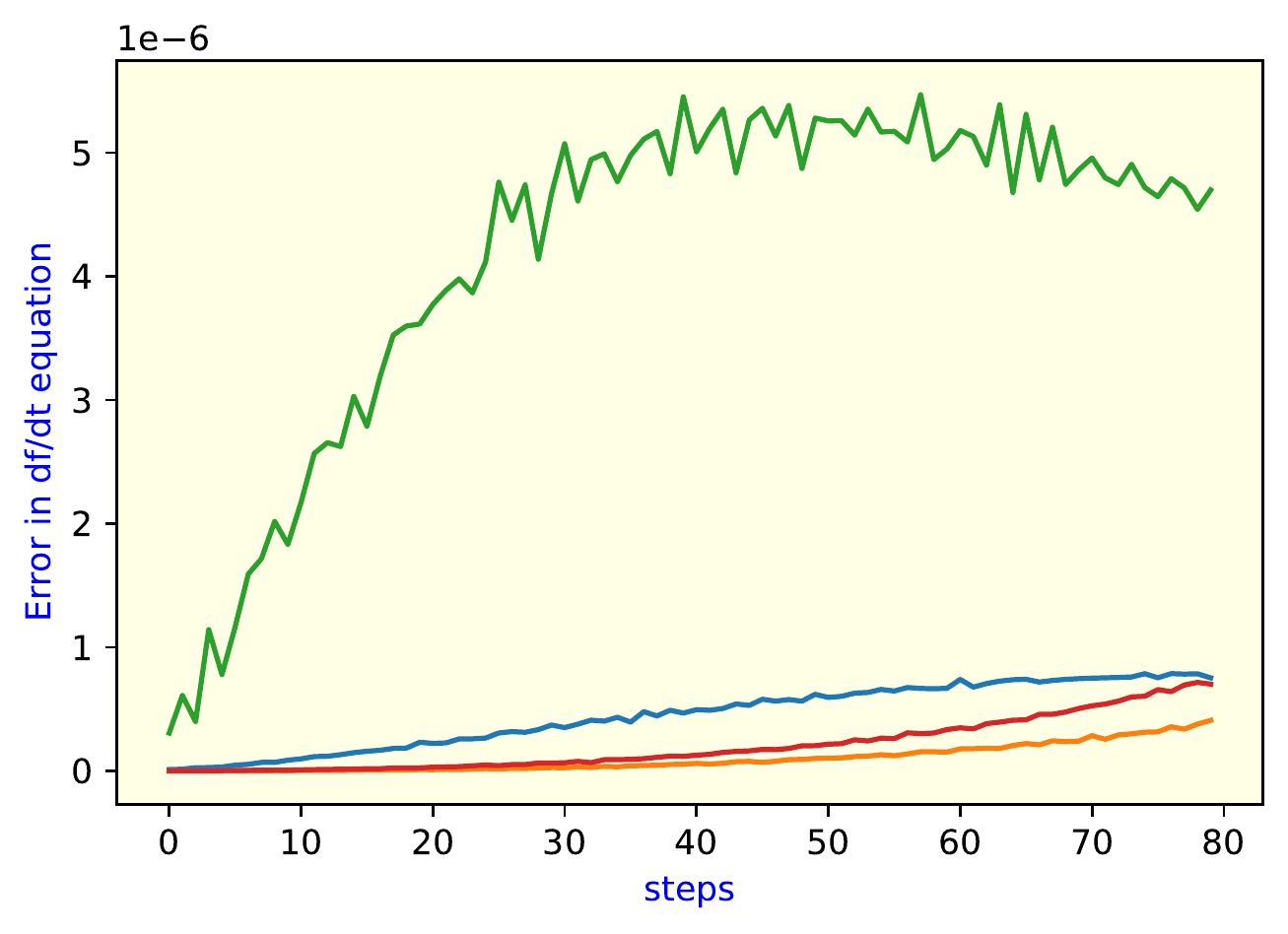}&
\includegraphics[width=.3\columnwidth]{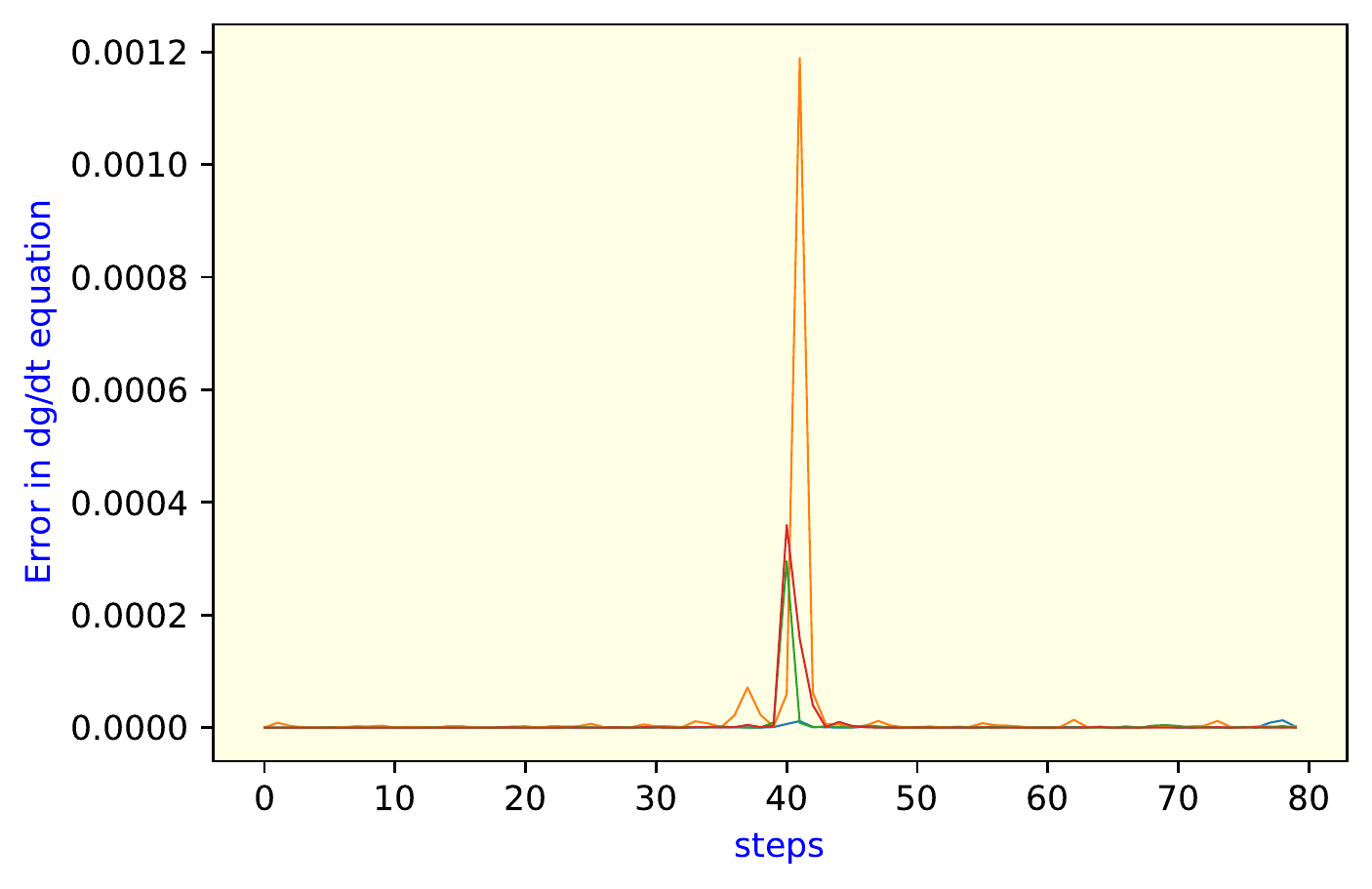}
\includegraphics[width=.3\columnwidth]{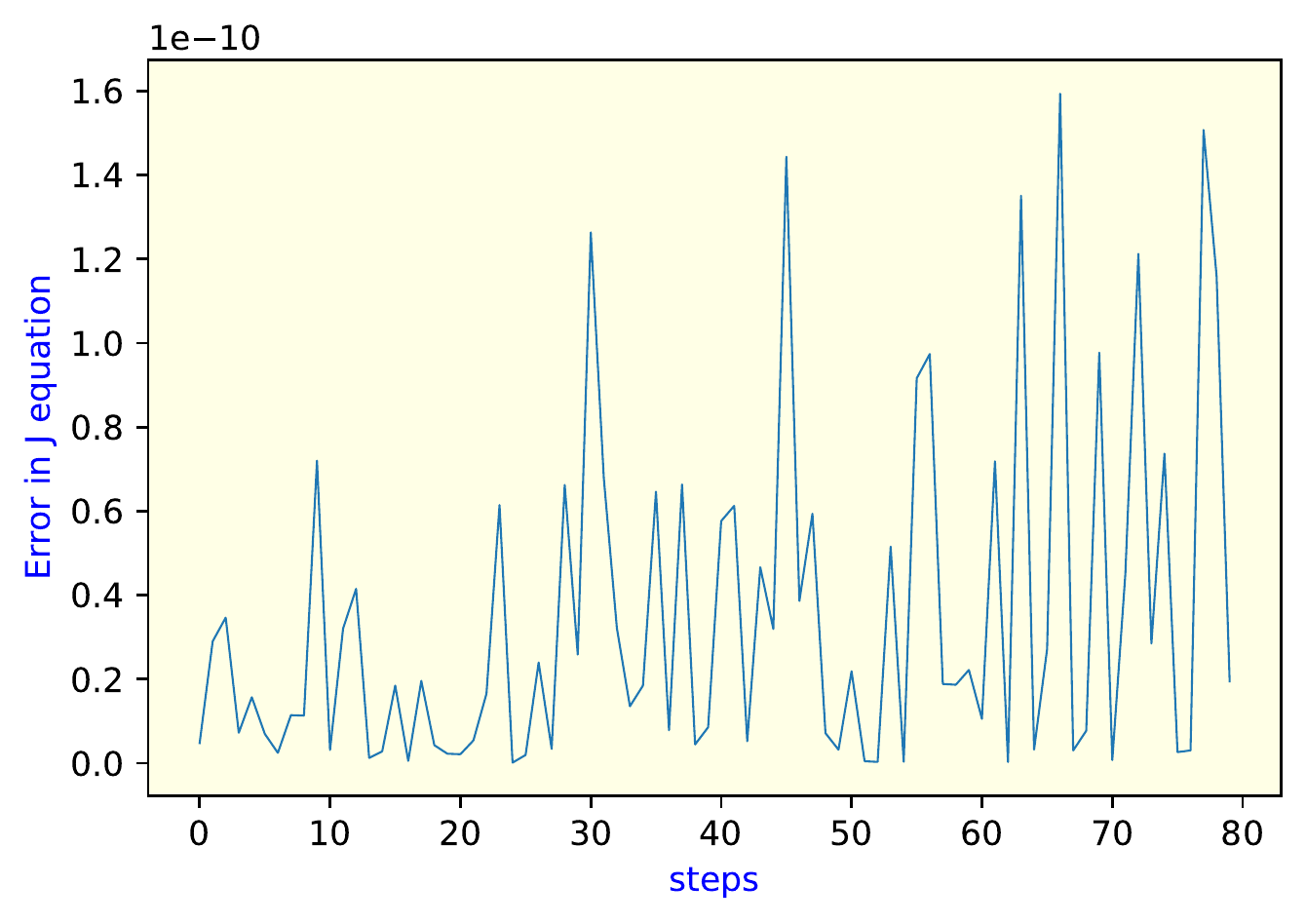}\\
(a) & (b)\\
\end{tabular}
  \caption{{Residual error for $f$, $g$ and $J$ evolution equations for a two-layer effective linear network described in \cref{verify}. Bottleneck size of three. Refer to \cref{verify} for details.}}%
  \label{fig:eom:res:errors:2-layer:lin}
\end{figure}

\newpage
\section{Finite approximation of infinite width networks}\label{finiteapprox}

We set up four-layer neural networks with a large number of dimensions $n$ (100K) to approximate infinite width layers, as well as a significantly smaller bottleneck layer between the wide layers. We used the standard batched stochastic gradient descent (SGD) with an L2-loss to train the model for a multioutput regression task on the MNIST dataset. The batchsize was reduced to 5 to allow for these wider networks to fit within GPU memory. We trained variants of the network with different bottleneck sizes and analyzed the change in losses during training (Figure~\ref{fig:finite_epoch}). The final loss and error rate (1-accuracy) at the end of training for 36 different bottleneck sizes: [1 to 9, 10 to 90, 100 to 900, 1000 to 9000] is shown in Figure~\ref{fig:finite_d} . For better clarity, Figure~\ref{fig:finite_epoch} shows the following selection of bottleneck sizes: [1, 2, 4, 8, 10, 20, 40, 80, 100, 200, 400, 800, 1000].

From the results, the best performing models have a relatively small bottleneck size. We observe that performance improves when the bottleneck size goes from wide to narrow, before degrading again when the width is extremely small as the representation power degrades. At the sweet spot, the model not only achieves lower training and test losses, it also trains faster. Interestingly, the bottleneck size with the lowest test loss did not result in the lowest test error (or highest accuracy). These observations are consistent with those in the infinite width bottleneck experiments (section~\ref{expt:data:mnist}).



\begin{figure}[ht]
\centering
  \subfloat[Training loss]{\label{fig:finite_train_err}\includegraphics[trim=30 25 455 440,clip,scale=0.45]{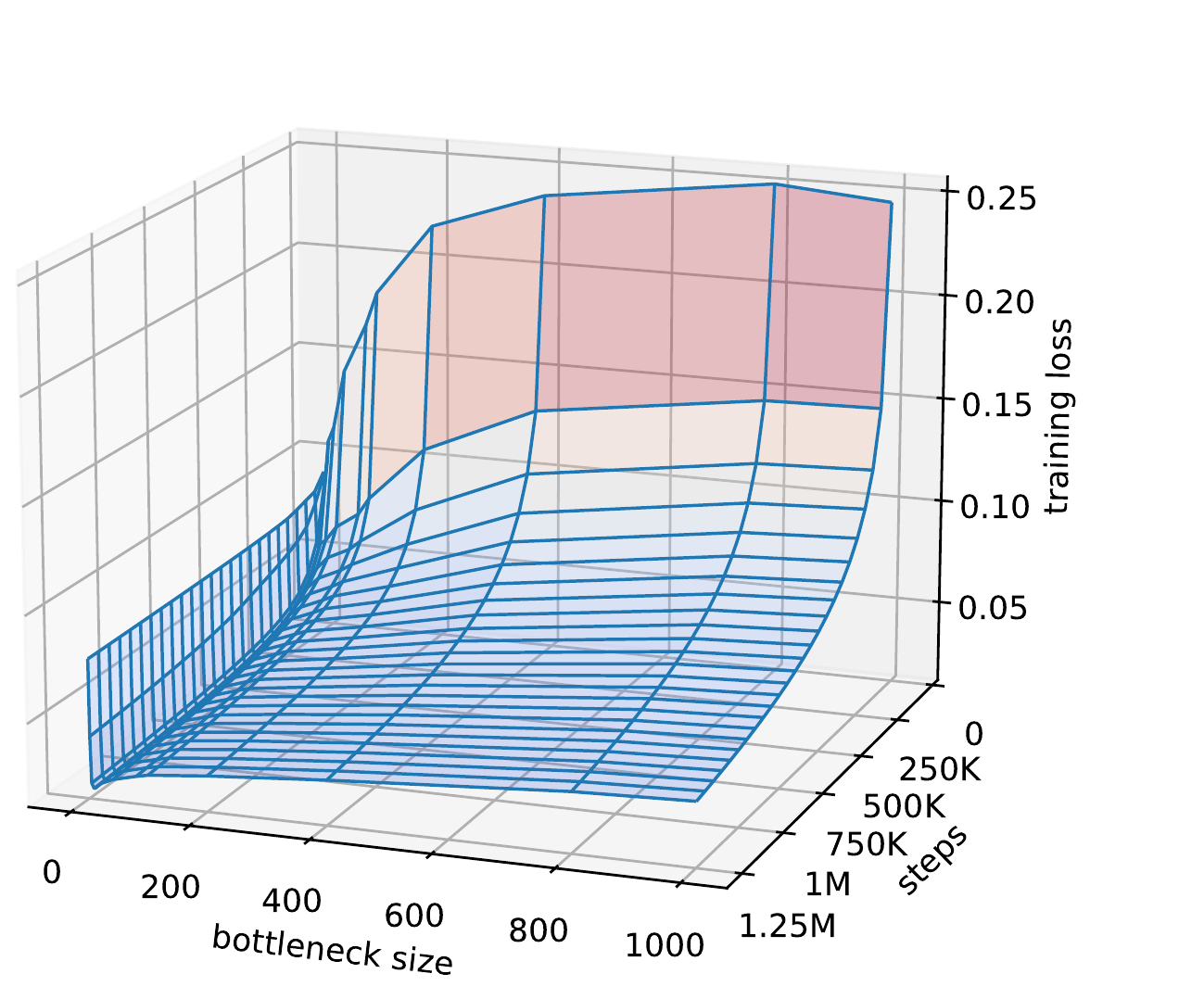}}
  \subfloat[Test loss]{\label{fig:finite_test_err}\includegraphics[trim=520 25 0 440,clip,scale=0.45]{workshop_figures/finite/finite_epoch_3d_type3.pdf}}\\
  \caption{Progression of training and test losses over training steps for $n=100K$ and bottleneck sizes from 1 to 1000.}
  \label{fig:finite_epoch}
\end{figure}

\begin{figure}[ht]
\centering
\subfloat[Loss]{\label{fig:finite_loss}\includegraphics[trim=0 0 425 0,clip,scale=0.45]{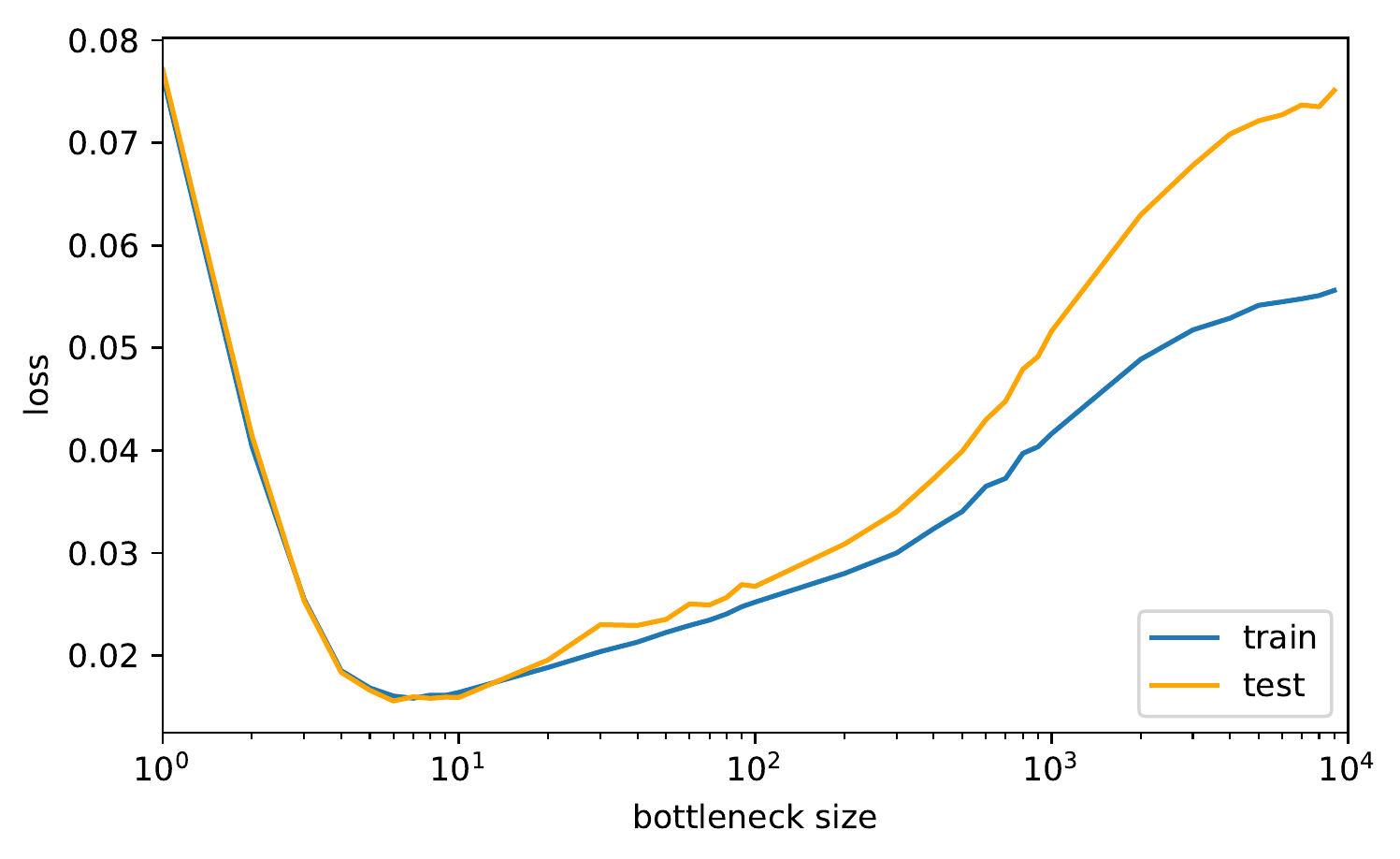}}
  \subfloat[Error rate (\%)]{\label{fig:finite_error}\includegraphics[trim=425 0 0 0,clip,scale=0.45]{workshop_figures/finite/finite_bottleneck_size_type3.pdf}}
  \caption{Loss and error rate w.r.t. bottleneck size for $n=100K$ and bottleneck sizes from 1 to 9000.}%
  \label{fig:finite_d}
\end{figure}

\end{document}